\def\eqref#1{equation~\ref{#1}}
\def\1{\bm{1}}
\DeclareMathAlphabet{\mathsfit}{\encodingdefault}{\sfdefault}{m}{sl}
\SetMathAlphabet{\mathsfit}{bold}{\encodingdefault}{\sfdefault}{bx}{n}
\theoremstyle{plain}
\newtheorem{definition}{Definition}
\newtheorem{theorem}{Theorem}
\newtheorem{lemma}{Lemma}
\newtheorem{apptheorem}{Theorem}
\newtheorem{applemma}{Lemma}
\newcommand{\irow}[1]{
  \begin{smallmatrix}(#1)\end{smallmatrix}%
}
\title{Constructing a Good Behavior Basis for \\ Transfer using Generalized Policy Updates}
\author{Safa Alver \\
  Mila, McGill University \\
  \texttt{safa.alver@mail.mcgill.ca}
  \And
  Doina Precup \\
  Mila, McGill University and DeepMind \\
  \texttt{dprecup@cs.mcgill.ca}
}
\begin{document}

\maketitle

\begin{abstract}
 We study the problem of learning a good set of policies, so that when combined together, they can solve a wide variety of unseen reinforcement learning tasks with no or very little new data. Specifically, we consider the framework of generalized policy evaluation and improvement, in which the rewards for all tasks of interest are assumed to be expressible as a linear combination of a fixed set of features. We show theoretically that, under certain assumptions, having access to a specific set of diverse policies, which we call a set of independent policies, can allow for instantaneously achieving high-level performance on all possible downstream tasks which are typically more complex than the ones on which the agent was trained. Based on this theoretical analysis, we propose a simple algorithm that iteratively constructs this set of policies. In addition to empirically validating our theoretical results, we compare our approach with recently proposed diverse policy set construction methods and show that, while others fail, our approach is able to build a behavior basis that enables instantaneous transfer to all possible downstream tasks. We also show empirically that having access to a set of independent policies can better bootstrap the learning process on downstream tasks where the new reward function cannot be described as a linear combination of the features. Finally, we demonstrate how this policy set can be useful in a lifelong reinforcement learning setting.
\end{abstract}

\section{Introduction}

Reinforcement learning (RL) studies the problem of building rational decision-making agents that maximize long term cumulative reward through trial-and-error interaction with a given environment. In recent years, RL algorithms combined with powerful function approximators such as deep neural networks have achieved significant successes in a wide range of challenging domains (see e.g.~\cite{mnih2015human, vinyals2019grandmaster,silver2017mastering, silver2018general}). However, these algorithms require substantial amounts of data for performing very narrowly-defined tasks. In addition to being data-hungry, they are also very brittle to changes in the environment, such as changes in the tasks over time. The most important reasons behind these two shortcomings is that RL algorithms usually learn to perform a task from scratch, without leveraging any form of prior knowledge, and they are trained to optimize performance on only a single task. 

A promising approach to tackle both of these shortcomings is to learn multiple ways of behaving, i.e., multiple policies that optimize different reward functions, and to reuse them as needed. Having access to multiple pre-learned policies can allow an agent to quickly solve reoccurring tasks in a lifelong RL setting. It can also allow for learning to combine the existing policies via a meta-policy to quickly learn new tasks, as in hierarchical RL. Recently, \cite{barreto2020fast} have proposed the ``generalized policy updates'' framework, which generalizes the classical policy evaluation and policy improvement operations that underlie many of today's RL algorithms. Its goal is to allow reusing policies resulting from previously learned tasks in order to perform well on downstream tasks, while also being data-efficient. More precisely, after learning the successor features of several policies in a policy set, also referred to as a behavior basis, they were able to instantaneously ``synthesize'', in a zero-shot manner, new policies to solve downstream tasks, via generalized policy improvement. However, this work leaves open two important questions: (i) what set of policies should the agent learn so that its instantaneous performance on {\em all} possible downstream tasks is guaranteed to be good, and (ii) under what conditions does such a set of policies exist.

In this paper, we provide answers to the questions above by proving that under certain assumptions about the environment dynamics and features, learning a diverse set of policies, which we call a {\em set of independent policies}, indeed guarantees good instantaneous performance on all possible downstream tasks. After providing an iterative algorithm for the construction of this set, we perform several experiments that validate our theoretical findings. In addition to the validation experiments, we compare this algorithm with recently proposed diverse policy set construction methods \citep{eysenbach2018diversity, zahavy2020discovering, zahavy2021discovering} and show that, unlike these methods, our approach is able to construct a behavior basis that enables instantaneous transfer to all possible tasks. We also show empirically that learning a set of independent policies can better bootstrap the learning process on downstream tasks where the reward function cannot be described by a linear combination of the features. Finally, we demonstrate the usefulness of this set in a lifelong RL scenario, in which the agent faces different tasks over its lifetime. We hope that our study will bring the community a step closer to building lifelong RL agents that are able to perform multiple tasks and are able to instantaneously/quickly adapt to new ones during its lifetime.

\section{Background}
\label{sec:background}

\textbf{Reinforcement Learning.} In RL \citep{sutton2018reinforcement}, an agent interacts with its environment by choosing actions to get as much as cumulative long-term reward. The interaction between the agent and its environment is usually modeled as a Markov Decision Process (MDP). An MDP is a tuple $M \equiv ( \mathcal{S}, \mathcal{A}, P, r, d_0, \gamma)$, where $\mathcal{S}$ is the (finite) set of states, $\mathcal{A}$ is the (finite) set of actions, $P: \mathcal{S}\times \mathcal{A}\times \mathcal{S} \to [0,1]$ is the transition distribution, $r: \mathcal{S}\times \mathcal{A}\times \mathcal{S} \to \mathbb{R}$ is the reward function, which specifies the task of interest, $d_0:\mathcal{S}\to [0,1]$ is the initial state distribution and $\gamma\in [0, 1)$ is the discount factor.  In RL, typically the agent does not have any knowledge about $P$ and $r$ beforehand, and its goal is to find, through pure interaction, a policy $\pi: \mathcal{S}\to \mathcal{A}$ that maximizes the expected sum of discounted rewards $E_{\pi, P}[\sum_{t=0}^\infty \gamma^t r(S_t, A_t, S_{t+1}) | S_0\sim d_0]$, where $E_{\pi, P}[\cdot]$ denotes the expectation over trajectories induced by $\pi$ and $P$.

\textbf{Successor Features.}  The successor representation for a state $s$ under a policy $\pi$ allows $s$ to be represented by the (discounted) distribution of states encountered when following $\pi$ from $s$ \citep{dayan1993improving}. Given a policy, successor features \citep[SF,][]{barreto2017successor} are a generalization of the idea of successor representations from the tabular setting to  function approximation. Following \cite{barreto2017successor}, we define SFs of a policy $\pi$ for state-action $(s,a)$ as:
\begin{equation}
    \bm{\psi}^{\pi} (s,a) \equiv E_{\pi, P} \left[ \sum_{i=0}^{\infty} \gamma^i \bm{\phi}(S_{t+i}, A_{t+i}, S_{t+i+1}) \Big| S_t=s, A_t=a \right],
\end{equation}
where the $i$th component of $\bm{\psi}^{\pi}$ gives the expected discounted sum of the $i$th component of the feature vector, $\phi_i$, when following policy $\pi$, starting from the state-action pair $(s,a)$.

Successor features allow a decoupling between the reward function and the environment dynamics. More concretely, if the reward function for a task can be represented as a linear combination of a feature vector $\bm{\phi}(s,a,s')\in \mathbb{R}^n$:
\begin{equation}
    r_{\mathbf{w}}(s,a,s') = \bm{\phi}(s,a,s')^\top \cdot \mathbf{w},
\end{equation}
where $\mathbf{w}\in \mathbb{R}^n$, then, as we will detail below, the state-action value function $Q_{r_{\mathbf{w}}}^{\pi}$ can be computed immediately as the dot-product of $\bm{\psi}^{\pi}$ and $\mathbf{w}$. The elements of $\mathbf{w}$, $w_i$, can be viewed as indicating a ``preference'' towards each of the features. Thus, we refer to $\mathbf{w}$ interchangeably as either the preference vector or the task. Intuitively, the elements $\phi_i$ of the feature vector $\bm{\phi}$ can be viewed as salient events that maybe desirable or undesirable to the agent, such as picking up or leaving objects of certain type, and reaching and/or avoiding certain states. 

\textbf{Generalized Policy Evaluation and Improvement.} Generalized Policy Evaluation (GPE) and Generalized Policy Improvement (GPI), together referred to as Generalized Policy Updates, are generalizations of the well-known policy evaluation and policy improvement operations in standard dynamic programming to a set of tasks and a set of policies \citep{barreto2020fast}. They are used as a transfer mechanism in RL to quickly construct a solution for a newly given task. One particularly efficient instantiation of GPE \& GPI is through the use of SFs and value-based action selection. More concretely, given a set of MDPs having the following form:
\begin{equation}
    \mathcal{M}_\phi \equiv \{ (\mathcal{S}, \mathcal{A}, P, r_{\mathbf{w}} = \bm{\phi}\cdot \mathbf{w}, d_0, \gamma) | \mathbf{w} \in \mathbb{R}^n \},
    \label{eqn:mdp}
\end{equation}
and given SFs $\bm{\psi}^{\pi} (s,a)$ of a policy $\pi$, an efficient form of GPE on the task $r_{\mathbf{w}}$ for policy $\pi$ can be performed as follows:
\begin{equation}
    \bm{\psi}^{\pi} (s,a)^\top \cdot \mathbf{w} = Q_{r_{\mathbf{w}}}^{\pi} (s,a),
\end{equation}
where $Q_{r_{\mathbf{w}}}^{\pi} (s,a)$ is the state-action value function of $\pi$ on the task $r_{\mathbf{w}}$. And, after performing GPE for all the policies $\pi$ in a finite policy set $\Pi$, following \cite{barreto2017successor}, an efficient form of GPI can be performed as follows:
\begin{equation}
    \pi_{\Pi}^{\text{GPI}} (s) \in \arg\max_{a\in \mathcal{A}} \max_{\pi\in \Pi} Q_{r_{\mathbf{w}}}^{\pi} (s,a).
\end{equation}
We will refer to this specific use of SFs and value-based action selection for performing GPE \& GPI as simply GPE \& GPI throughout the rest of this study. Note that $\pi^{\text{GPI}}$ will in general outperform all the policies in $\Pi$, and that the actions selected by $\pi^{\text{GPI}}$ on a state may not coincide with any of the actions selected by $\pi\in \Pi$ on that state. Hence, the policy space that can be attained by GPI can in principle be a lot larger than, e.g., the space accessible by calling policies sequentially from the original set.

\section{Problem Formulation and Theoretical Analysis}
\label{sec:theory}

GPE \& GPI provide a guarantee that, for any reward function linear in the features,  $\pi^{\text{GPI}}$ is at least as good as any of the policies $\pi$ from the ``base set" which was used to construct it. While this is an appealing guarantee of monotonic improvement, it does not say much, for two reasons. First, it is not clear how big an improvement can be expected for different tasks. More importantly, it leaves open the question of how one should choose base policies in order to ensure as much improvement as possible. After all, if we had a weak set of policies and we simply matched their value with $\pi^{\text{GPI}}$, this may not be very useful.
We will now show that, under certain assumptions, having access to a specific set of diverse policies, which we call a set of independent policies, can allow for instantaneously achieving high-level performance on {\em all} possible downstream tasks. 

Let us start by assuming that we are interested in a set of MDPs $\mathcal{M}_\phi$, as defined in ($\ref{eqn:mdp}$), with deterministic transition functions (the reason for the determinism assumption  will become clear by the end of this section). For convenience, we also restrict the possible $\mathbf{w}$ values from $\mathbb{R}^n$ to $\mathcal{W}$, where $\mathcal{W}$ is the surface of the $\ell_2$ $n$-dimensional ball. Note that this choice does not alter the optimal policies of the MDPs in $\mathcal{M}_\phi$, as an optimal policy is invariant with respect to the scale of the reward and $\mathcal{W}$ contains all possible directions. Next, we assume that the features $\phi_i$ that make up the feature vectors $\bm{\phi}$ form a \emph{set of independent features (SIF)}, defined as follows:

\begin{definition}[SIF] \label{def:sif}
A set of features $\Phi = \{\phi_i | \phi_{i}: \mathcal{S}\times \mathcal{A}\times \mathcal{S} \to \{0, C\}, C\in \mathbb{R}_{+}\}_{i=1}^n$ is called {\em independent} if, for any feature $\phi_i\in \Phi$ and any initial state $s_0\sim d_0$, we have: (i)  $\phi_i (s_0, a_0, s_1) = 0 \textnormal{ }$ $\forall a_0\in\mathcal{A} \textnormal{ and } \forall s_1\sim P(s_0,a_0,\cdot)$,
and (ii) there exists at least one trajectory, starting from $s_0$, in which all the states associated with $\phi_i(s_t, a_t, s_{t+1}) = C$ are visited, while the states associated with $\phi_j(s_t, a_t, s_{t+1}) = C$, $\forall j\neq i$, are not visited.
\end{definition}

It should be noted that a specific instantiation of this definition is the case where each feature is set to a positive constant at certain independently reachable state/states and zero elsewhere, which is the most common instantiation of feature vectors used in previous related work~\citep{barreto2017successor, barreto2020fast}.

We define the performance of an arbitrary policy $\pi$ on a task $r_{\mathbf{w}}$ as:
\begin{equation}
    J_{r_\mathbf{w}}^\pi \equiv E_{\pi, P} \left[ \sum_{t=0}^\infty r_{\mathbf{w}}(S_t, A_t, S_{t+1}) \Big| S_0\sim d_0 \right],
    \label{eqn:measure}
\end{equation}
where $E_{\pi, P}[\cdot]$ denotes the expectation over trajectories induced by $\pi$ and $P$. Note that $J_{r_\mathbf{w}}^\pi$ is a scalar corresponding to the expected undiscounted return of policy $\pi$ under the initial state distribution $d_0$, which is the expected total reward obtained by  $\pi$ when starting from $s_0\sim d_0$. We are now ready to formalize the problem we want to tackle:

\textbf{Problem Formulation.} Given a set of MDPs $\mathcal{M}_\phi$ with deterministic transition functions and a SIF, we want to construct a set of $n$ policies $\Pi^n =\{ \pi_i \}_{i=1}^n$, such that the performance of the policy $\smash{\pi_{\Pi^n}^\textnormal{GPI}}$ obtained by performing GPI on that set will maximize (\ref{eqn:measure}) for all rewards $r_\mathbf{w}$, where $\mathbf{w}\in \mathcal{W}$. That is, we want to solve the following problem:
\begin{equation}
    \arg\max_{\Pi^n \subseteq \Pi} J_{ r_\mathbf{w}}^{\pi_{\Pi^n}^\textnormal{GPI}} \textnormal{ for all } \mathbf{w}\in \mathcal{W}.
    \label{eqn:problem}
\end{equation}

It should be noted that the performance measure provided in (\ref{eqn:measure}) only measures the expected total reward and thus cannot capture the optimality of the GPI policy. For instance, this measure cannot distinguish between two policies that achieve the same expected total reward in a different number of time steps. However, Theorem 2 in \cite{barreto2017successor} implies that, in general, the only way to guarantee the optimality of the GPI policy is to construct a behavior basis that contains all possible policies induced by all $\mathbf{w}\in\mathcal{W}$. Since there are infinitely many $\mathbf{w}$ values, this is impractical. Thus, in this study, we only consider GPI policies that maximize the expected total reward (\ref{eqn:measure}).

As a solution candidate to the problem in (\ref{eqn:problem}), we now focus on a specific set of deterministic policies, called \emph{set of independent policies (SIP)}, that are able to obtain features independently of each other:

\begin{definition}[SIP] \label{def:sip}
Let $\Phi = \{ \phi_i \}_{i=1}^n$ be a SIF and let $\Pi = \{ \pi_i \}_{i=1}^n$ be a set of deterministic policies that are induced by each of the features in $\Phi$. $\Pi$ is defined to be a {\em SIP} if its elements, $\pi_i$, satisfy:
\begin{equation}
    \phi_j(s_t, a_t, s_{t+1}) = \phi_j(s_0, a_0, s_1) \quad \forall j\neq i,  \forall i, \forall  s_0\sim d_0 \ \text{ and } \ \forall t\in \{1,\ldots, T \},
    \label{eqn:ind_pol_cond}
\end{equation}
where $T$ is the horizon in episodic environments and $T\to \infty$ in non-episodic ones, $a_0 = \pi_i(s_0)$ and $(s_t, a_t, s_{t+1})_{t=1}^T$ is the sequence of state-action-state triples generated by $\pi_i$'s interaction with the environment.
\end{definition}

In general, having a SIP in a set of MDPs with stochastic transition functions is not possible, as the stochasticity can prevent  (\ref{eqn:ind_pol_cond}) from holding. Thus, the assumption of a set of MDPs with deterministic transition functions is critical to our analysis. An immediate consequence of having this set of policies is that the corresponding SFs can be expressed in a simpler form, as follows:

\begin{lemma}
\label{thm:lemma}
Let $\Phi$ be a SIF and let $\pi_i$ be a policy that is induced by the feature $\phi_i\in \Phi$ and is a member of a SIP $\Pi$. Then, the entries of the SF $\bm{\psi}^{\pi_i}$ of policy $\pi_i$ has the following form:
\begin{equation}
    \psi_j^{\pi_i} (s,a) = 
    \begin{dcases}
        \psi_i^{\pi_i} (s,a), & \text{if } i=j \\
        0, & \text{otherwise}
    \end{dcases}.
    \label{eqn:simple_form}
\end{equation}
\end{lemma}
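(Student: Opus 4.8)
The plan is to unfold the definition of the successor features, exploit the determinism of $P$ to remove the expectation, and then read off the vanishing of the off-diagonal entries directly from the SIP and SIF conditions.

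First I would fix an initial state $s_0 \sim d_0$ and let $(s_t, a_t, s_{t+1})_{t=0}^{T}$ be the trajectory generated by $\pi_i$ from $s_0$ (with $a_0 = \pi_i(s_0)$ and $a_t = \pi_i(s_t)$ for $t \ge 1$); since $P$ is deterministic this trajectory is unique. Because there is no randomness in the transitions, the successor feature at any state-action pair $(s_t, a_t)$ on this trajectory collapses to the deterministic series
\begin{equation*}
  \psi_j^{\pi_i}(s_t, a_t) = \sum_{k=0}^{\infty}\gamma^k\,\phi_j(s_{t+k}, a_{t+k}, s_{t+k+1}),
\end{equation*}
which converges since $0 \le \gamma < 1$ and each $\phi_j$ is bounded (it takes values in $\{0, C\}$); in the episodic case it is just a finite sum.

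Next, for $j \ne i$ I would combine the two structural hypotheses: condition (i) of Definition~\ref{def:sif} gives $\phi_j(s_0, a_0, s_1) = 0$ for any first action, and the SIP condition \eqref{eqn:ind_pol_cond} applied to $\pi_i$ gives $\phi_j(s_t, a_t, s_{t+1}) = \phi_j(s_0, a_0, s_1)$ for all $t \in \{1,\dots,T\}$; together these force $\phi_j(s_t, a_t, s_{t+1}) = 0$ for every $t$, so every term of the series above vanishes and $\psi_j^{\pi_i}(s_t, a_t) = 0$. For $j = i$ there is nothing to prove — that entry is left unconstrained and carries all of the content of $\bm{\psi}^{\pi_i}$ — so the two cases together yield exactly \eqref{eqn:simple_form}.

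The one place I expect to have to be careful is the range of $(s,a)$ for which the statement is asserted: the SIP condition only controls the features along trajectories that $\pi_i$ actually produces from $d_0$, so \eqref{eqn:simple_form} should be read for the state-action pairs visited with positive probability under $\pi_i$ (with the initial state handled, for an arbitrary first action, by SIF condition (i)) — which is precisely the set of pairs relevant when these SFs are later plugged into GPE and GPI. Apart from pinning down that scope, the argument is essentially a one-line substitution and I do not anticipate a genuine obstacle; it is exactly the determinism of $P$ that makes it go through, since under stochastic transitions the per-trajectory constancy in \eqref{eqn:ind_pol_cond} would not be enough to eliminate the expectation in the definition of $\bm{\psi}^{\pi_i}$.
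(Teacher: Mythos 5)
Your proof is correct and follows essentially the same route as the paper's: use the SIP condition \eqref{eqn:ind_pol_cond} to equate every feature value $\phi_j$ ($j\neq i$) along $\pi_i$'s trajectory with its value at the initial transition, then invoke SIF condition (i) to conclude that value is zero, so the discounted sum vanishes. Your explicit use of determinism to collapse the expectation and your remark about the scope of $(s,a)$ (on-trajectory pairs from $d_0$) just make explicit what the paper's proof handles implicitly.
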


Due to the space constraints, we provide all the proofs in the supp.\ material. Lemma \ref{thm:lemma} implies that once we have a SIP, the SFs take the much simpler form (\ref{eqn:simple_form}), which can allow for the GPI policy to maximize performance on {\em all} possible tasks according to the measure in (\ref{eqn:measure}), solving the optimization objective in (\ref{eqn:problem}):

\begin{theorem}
\label{thm:theorem}
Let $\Phi$ be a SIF and let $\Pi$ be a SIP induced by each of the features in $\Phi$. Then, the GPI policy $\pi_{\Pi}^{\textnormal{GPI}}$ is a solution to the optimization problem defined in (\ref{eqn:problem}).
\end{theorem}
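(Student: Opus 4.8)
The plan is to recast the statement as a question about which features the GPI policy ends up collecting. First I would use Lemma~\ref{thm:lemma}: since $\bm\psi^{\pi_i}$ is supported on its $i$-th coordinate only, generalized evaluation collapses to $Q^{\pi_i}_{r_\mathbf w}(s,a)=w_i\,\psi_i^{\pi_i}(s,a)$, so the GPI rule becomes
\begin{equation}
\pi^{\textnormal{GPI}}_\Pi(s)\in\argmax_{a\in\mathcal A}\ \max_i\ w_i\,\psi_i^{\pi_i}(s,a).
\end{equation}
Second, because each $\phi_i$ is $\{0,C\}$-valued and the dynamics are deterministic, for any policy $\pi$ the undiscounted return splits as $J_{r_\mathbf w}^\pi = C\sum_i w_i N_i^\pi$, where $N_i^\pi$ is the number of times $\phi_i$ fires on $\pi$'s trajectory from $s_0$ (with the expectation over $s_0\sim d_0$); so (\ref{eqn:problem}) amounts to: for each $w$, pick the achievable feature-count vector maximizing $\langle w,\cdot\rangle$.

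The core is then a two-part argument. (I) \emph{Upper bound:} for every $w\in\mathcal W$, $\sup_\pi J_{r_\mathbf w}^\pi = C\cdot\max\{0,\ \max_i w_i N_i^{\max}\}$, where $N_i^{\max}=\max_\pi N_i^\pi$ and, as part of the argument, $\pi_i$ realizes it. The SIF/SIP structure is exactly what makes this clean: condition~(i) kills the first transition, while condition~(ii) together with (\ref{eqn:ind_pol_cond}) is what I would use to show that the achievable count vectors are ``one feature at a time'', so that no mixture of positively-weighted features can beat going all-in on the single best-aligned one (and when all $w_i\le 0$ the best one can do is collect nothing, whence the $\max$ with $0$). (II) \emph{Attainment:} GPI on $\Pi$ meets this bound. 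If some $w_i>0$, I would show the GPI trajectory from $s_0$ commits to exactly one feature $i^\star$ and realizes all $N_{i^\star}^{\max}$ of its firings and none of the others --- by induction that at every visited state the only positive term in $\max_i w_i\psi_i^{\pi_i}(s,\cdot)$ is $i^\star$'s and is maximized by the move keeping the agent on such a route --- and then that this $i^\star$ satisfies $w_{i^\star}N_{i^\star}^{\max}=\max_i w_i N_i^{\max}$, giving $J^{\pi^{\textnormal{GPI}}_\Pi}_{r_\mathbf w}=C\,w_{i^\star}N_{i^\star}^{\max}$. If all $w_i\le 0$, I would instead show that at every reachable state some action keeps every $\psi_i^{\pi_i}$-term at a value for which the $w_i$-weighted maximum is $0$, so no feature is triggered and the return is $0$. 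Matching~(I) for every $w\in\mathcal W$ --- and using, as the paper already notes, that (\ref{eqn:problem}) asks only for a maximizer of the undiscounted return, not a fully optimal policy --- shows $\Pi$ solves (\ref{eqn:problem}).

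The main obstacle is part~(I): one has to genuinely use determinism and the SIF/SIP definitions to rule out trajectories that pick up two or more positively-weighted features at once, or that pass through a negatively-weighted feature $j$ en route to a more valuable positive feature $i$ with a net gain. A secondary subtlety in part~(II) is that GPI ranks actions by \emph{discounted} $Q$-values whereas $J$ is \emph{undiscounted}: I would need to verify that the discount only affects the order and route in which $\phi_{i^\star}$'s firings are collected, not the eventual undiscounted count, so that the induction in~(II) terminates with all $N_{i^\star}^{\max}$ collected and with GPI's choice of $i^\star$ still $J$-optimal.
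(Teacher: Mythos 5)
Your first step matches the paper: by Lemma~\ref{thm:lemma}, GPE collapses to $Q^{\pi_i}_{r_{\mathbf{w}}}(s,a)=w_i\,\psi_i^{\pi_i}(s,a)$ and GPI to $\argmax_{a}\max_i w_i\,\psi_i^{\pi_i}(s,a)$. The genuine gap is your part~(I). The claimed upper bound $\sup_\pi J^{\pi}_{r_{\mathbf{w}}}=C\max\{0,\max_i w_i N_i^{\max}\}$ --- i.e.\ that no trajectory mixing several positively-weighted features can beat going all-in on the single best-aligned one --- does not follow from the SIF/SIP definitions and is false in the very setting the theorem targets. Condition~(ii) of Definition~\ref{def:sif} only asserts the \emph{existence} of a trajectory that collects all of feature $i$ and none of the others; it places no restriction on what other trajectories may collect, and nothing in (\ref{eqn:ind_pol_cond}) forbids a single trajectory from firing several features. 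In the item-collection environment of Fig.~\ref{fig:env1} (the paper's canonical instance), one trajectory can pick up all items of both types within the horizon, so for a task with two positive weights the optimum is $C\,(w_1 N_1^{\max}+w_2 N_2^{\max})$, strictly larger than your bound; your ``one feature at a time'' reading of the achievable count vectors is a misreading of the definitions. Part~(II) inherits the same error: when several $w_i>0$, several terms $w_i\psi_i^{\pi_i}(s,a)$ are simultaneously positive, and once the items driving $\psi_{i^\star}^{\pi_{i^\star}}$ have been collected that term shrinks and the maximum shifts to another positively-weighted feature, so the GPI policy does not commit to a single feature and collect ``none of the others''.

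The paper's argument goes the other way: from $\argmax_a\max_i w_i\psi_i^{\pi_i}(s,a)$ it concludes that the GPI policy obtains \emph{all} features with $w_i>0$ (in an order governed by the products $w_i\psi_i^{\pi_i}(s,a)$), ignores those with $w_i=0$, and avoids those with $w_i<0$, and that this behavior maximizes the undiscounted return (\ref{eqn:measure}) for every $\mathbf{w}\in\mathcal{W}$. (That final step is stated informally in the paper, so there is room for a more careful argument, but any repair must establish optimality of the ``collect every positively-weighted feature, avoid every negatively-weighted one'' behavior --- not of the single-feature behavior your proposal builds on.) Your closing observation about discounted $Q$-values versus the undiscounted criterion is a legitimate subtlety, but it cannot be resolved within your framework because the trajectory structure you are inducting on is not the one GPI actually produces.
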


Theorem \ref{thm:theorem} implies that having access to a SIP which consists of only $n$ policies, where $n$ is the dimensionality of $\bm{\phi}$, and applying the policy composition operator GPI on top, allows for instantaneously achieving maximum performance across all possible downstream tasks. Considering the fact that there are {\em infinitely many} such tasks, this provides a significant gain in the number of policies that are required to be learned for full downstream task coverage.

\section{Constructing a Set of Independent Policies}
\label{sec:construction}

\begin{wrapfigure}{R}{0.5\textwidth}
    \vspace{-0.5cm}
    \centering
    \begin{minipage}{1.0\linewidth}
    \begin{algorithm}[H]
    \label{alg:algorithm}
    \caption{Constructing a SIP}
    \SetAlgoLined
     \textbf{Require:} \textnormal{a SIF $\Phi$} \;
     \textbf{Initialize:} $\Pi^0 \leftarrow \{\}$, $t\leftarrow 1$, $W\leftarrow \{\mathbf{w}_i\}_{i=1}^n$ \;
     \While{$t\leq n$}{
      $\mathbf{w}_t \leftarrow W [t]$ \;
      $\pi^t \leftarrow \textnormal{solution of the task } r_{\mathbf{w}_t} \textnormal{ using RL}$ \;
      $\Pi^{t} \leftarrow \Pi^{t-1} + \{ \pi^{t}\}$ \;
      $t \leftarrow t+1$ \;
     }
     \textbf{return:} $\Pi^n$
    \end{algorithm}
    \end{minipage}
    \vspace{-0.5cm}
\end{wrapfigure}

Based on our theoretical analysis in Section \ref{sec:theory}, we now propose a simple algorithm that iteratively constructs a SIP, given a SIF. Since independent policies can attain certain features without affecting the others, a set of them can be constructed by simply learning policies for each of the appropriate tasks and then by adding these learned policies one-by-one to an initially empty set. In particular, by sequentially learning policies for each of the tasks in $W = \{\mathbf{w}_i\}_{i=1}^n \subset \mathcal{W}$, where $\mathbf{w}_i$ is an $n$-dimensional vector with a positive number in the $i$th coordinate and negative numbers elsewhere, and then adding these policies to a policy set, one can construct a SIP. Algorithm \ref{alg:algorithm} provides a step-by-step description of this construction process. Note that the algorithm does not depend on the particular values of $\mathbf{w}_i\in W$ as long as the corresponding coordinate has a positive value and all others are negative. Another thing to note is that the algorithm runs for only $n$ iterations, where $n$ is the dimensionality of the feature vector $\bm{\phi}$.

\section{Experiments}
\label{sec:experiments}

\begin{figure}
    \centering
    \begin{subfigure}{0.275\textwidth}
        \centering
        \includegraphics[height=8em]{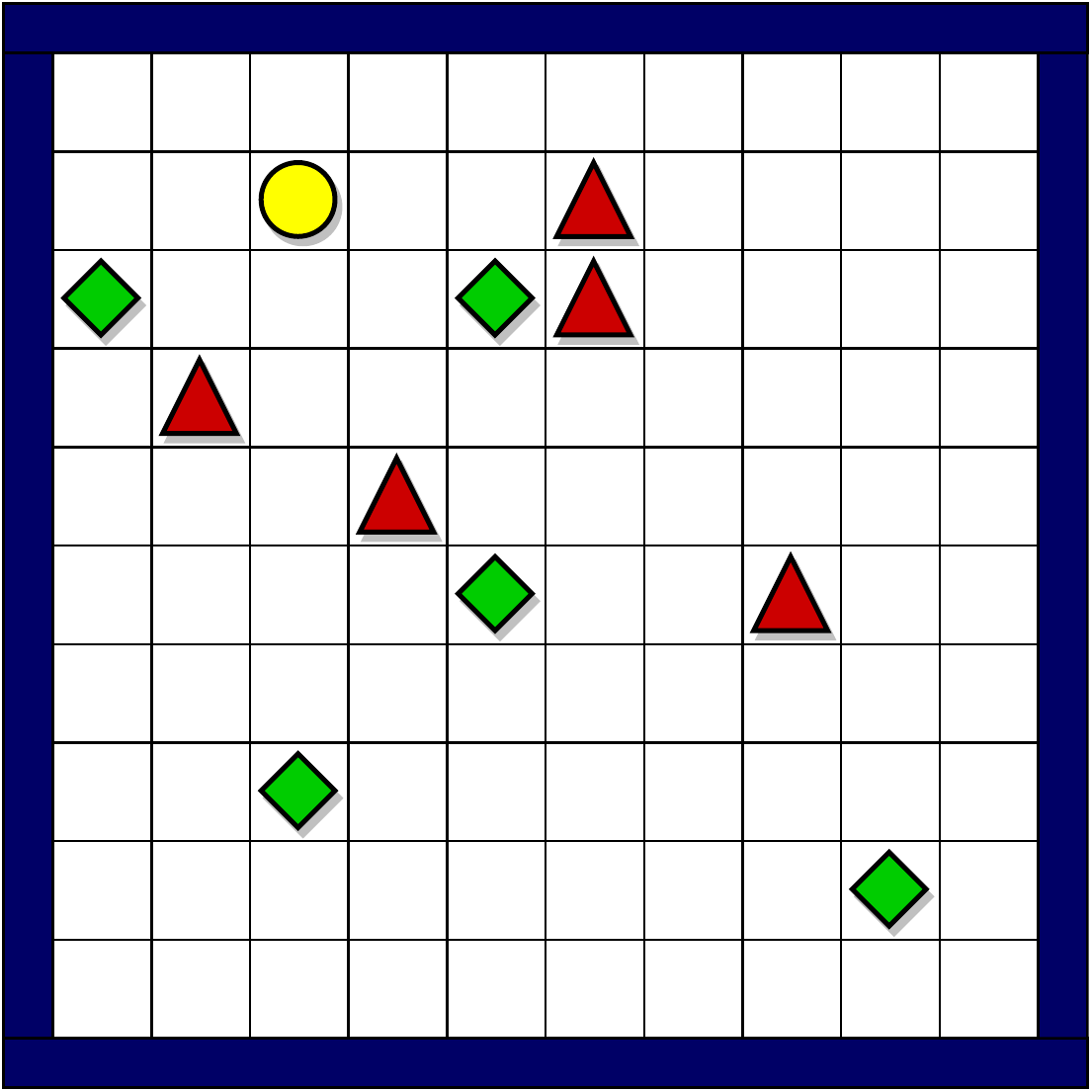}
        \caption{Environment} \label{fig:env1}
    \end{subfigure}
    \hspace{0.25cm}
    \begin{subfigure}{0.275\textwidth}
        \centering
        \includegraphics[height=8em]{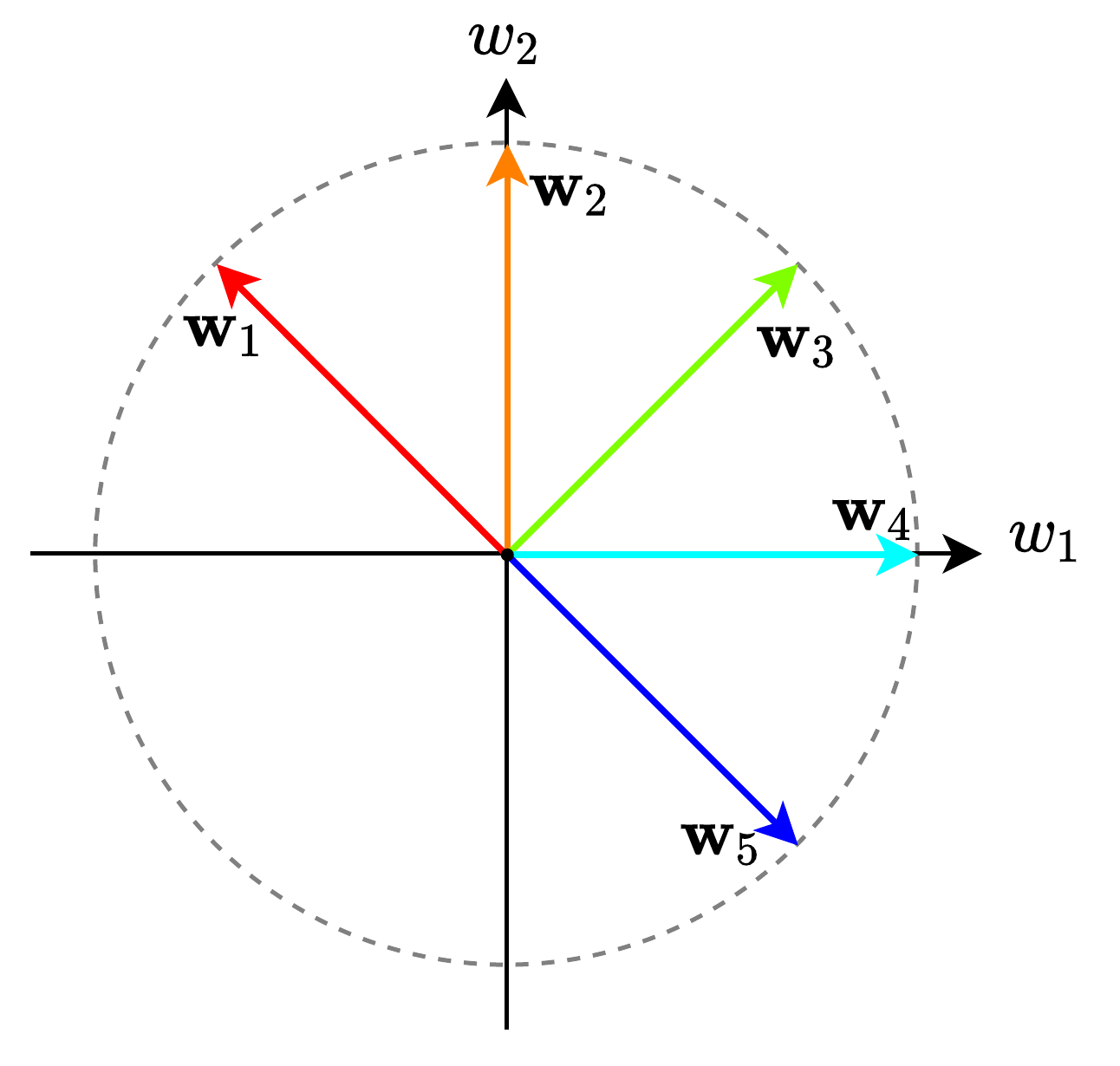}
        \caption{Preference vectors} \label{fig:env2}
    \end{subfigure}
    \hspace{0.25cm}
    \begin{subfigure}{0.275\textwidth}
        \centering
        \includegraphics[height=8em]{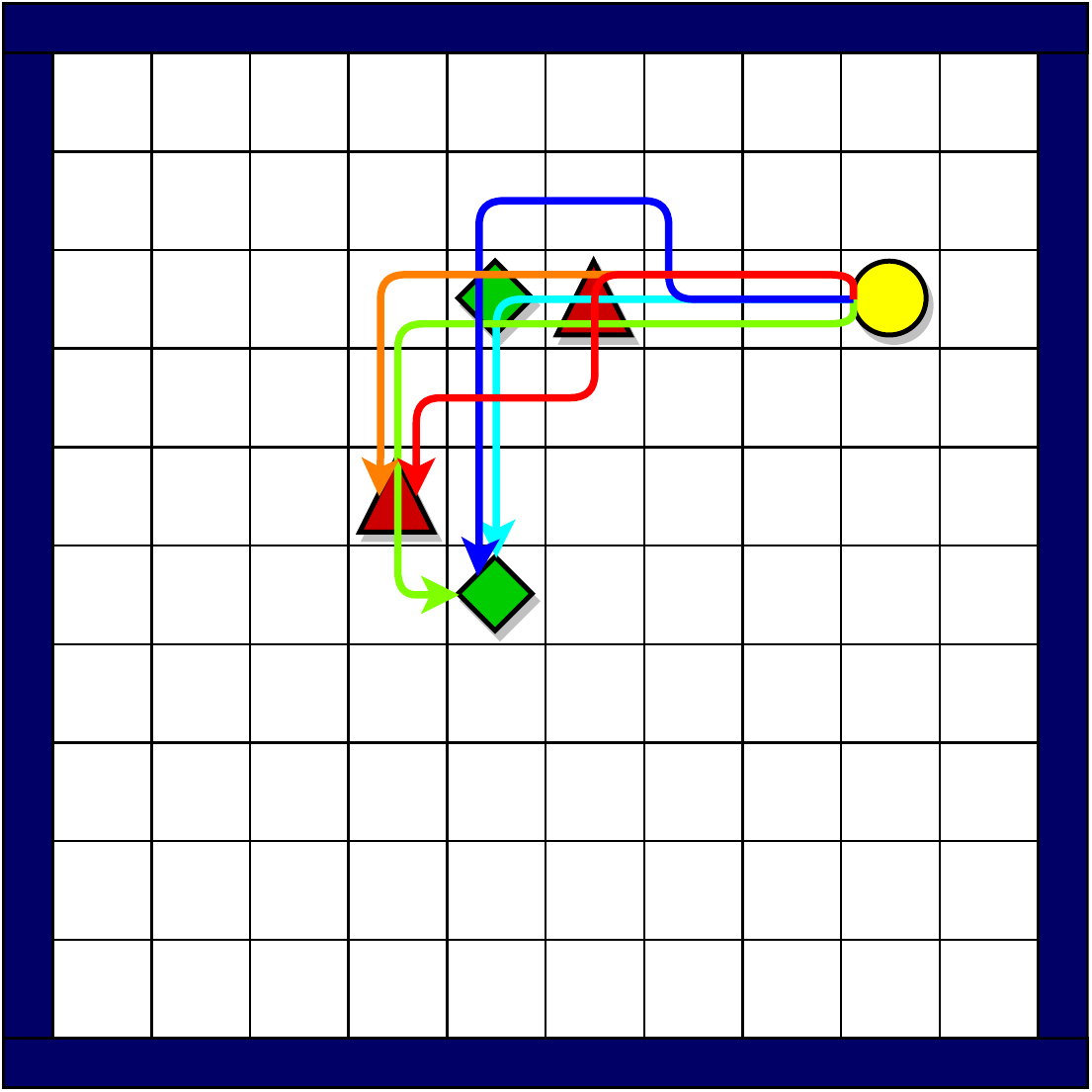}
        \caption{Trajectories} \label{fig:env3}
    \end{subfigure}
    \caption{(a) The 2D item collection environment \citep{barreto2020fast}. The environment consists of $10\times 10$ cells and the shape (traingle and diamond) of the items represents their type. (b) Five distinct preference vectors that lie on the surface of the $\ell_2$ 2D ball: $\mathbf{w}_1 = \irow{-\sqrt{1/2} & +\sqrt{1/2}}$, $\mathbf{w}_2 = \irow{0 & +1}$, $\mathbf{w}_3 = \irow{+\sqrt{1/2} & +\sqrt{1/2}}$, $\mathbf{w}_4 = \irow{+1 & 0}$ and $\mathbf{w}_5 = \irow{+\sqrt{1/2} & -\sqrt{1/2}}$. (c) The trajectories taken by the optimal policies ($\gamma=0.95$) corresponding to the preference vectors $\mathbf{w}_1, \dots, \mathbf{w}_5$ in a simplified version of the environment with two items of each type. }
    \label{fig:env}
\end{figure}

We start this section by performing experiments to illustrate the theoretical results derived in Section \ref{sec:theory}. Then, we compare the performance of the policy construction approach presented in Algorithm \ref{alg:algorithm} with recently proposed diverse policy set construction methods \citep{eysenbach2018diversity, zahavy2020discovering, zahavy2021discovering}. Next, we show that a SIP can better bootstrap learning on new tasks where the reward function cannot in fact be expressed as a linear combination of the features, a case that lies outside of our assumptions. Finally, we demonstrate that the set of policies produced by our approach can be useful in a lifelong RL setting. The experimental details together with more detailed results can be found in the supp.\ material.

\textbf{Experimental Setup.} Throughout this section, we perform experiments on the 2D item collection environment  proposed in \cite{barreto2020fast} (see Fig.\ \ref{fig:env1}), as it is a {\em prototypical} environment where GPE \& GPI is useful and it allows for easy visualization of performance across all downstream tasks. Here, the agent, depicted by the yellow circle, starts randomly in one of the cells and has to obtain/avoid $5$ randomly placed red and green items, which are of different type. At each time step, the agent receives an image that contains its own position and the position and type of each item. Based on this, the agent selects an action that deterministically moves it to one of its four neighboring cells (except if the agent is adjacent to the boundaries of the grid, it remains on the same cell). By moving to the cells occupied by an item, the agent picks up that item and gets a reward defined by that item type. The goal of the agent is to pick up the ``good'' items and avoid the ``bad'' ones, depending on the preference vector $\mathbf{w}$. The agent-environment interaction lasts for $50$ time steps, after which the agent receives a ``done" signal, marking the end of the episode. Note that despite its small size, the cardinality of this environment's state space is on the order of $10^{15}$ and thus requires the use of function approximation.
More on the implementation details of the environment and results on its stochastic version can be found in the supp.\ material.

In order to  use GPE \& GPI in this environment, we must first define: (i) the features $\bm{\phi}$ and (ii) a set of policies $\Pi$. Following \cite{barreto2020fast}, we define each feature $\phi_i$ as an indicator function signalling whether an item of type $i$ has been picked up by the agent. That is, $\phi_i(s,a,s')=1$ if taking action $a$ in state $s$ results in picking up an item of type $i$, and $\phi_i(s,a,s')=0$ otherwise. Note that these form a SIF, as there exist trajectories which start as $\bm{\phi}(s_0,a_0,s_1)=0$ and in which all the features associated with a certain item type can be obtained without affecting the others. We now turn to the question on how to determine the set of policies $\Pi$. We restrict the policies in $\Pi$ to be solutions of the tasks $\mathbf{w}\in \mathcal{W}$. With Algorithm \ref{alg:algorithm}, we have already provided one way of constructing this set. Throughout the rest of this section we will compare Algorithm \ref{alg:algorithm} with alternative policy set construction methods.

Following \cite{barreto2020fast}, we use an algorithm analogous to Q-learning to approximate the SFs induced by $\bm{\phi}$ and $\Pi$. Similarly, we represent the SFs using multilayer perceptrons with two hidden layers. More details can be found in \cite{barreto2020fast}.

\subsection{Illustrative Experiments}

\begin{figure}
    \centering
    \begin{subfigure}{0.375\textwidth}
        \centering
        \includegraphics[width=0.95\textwidth]{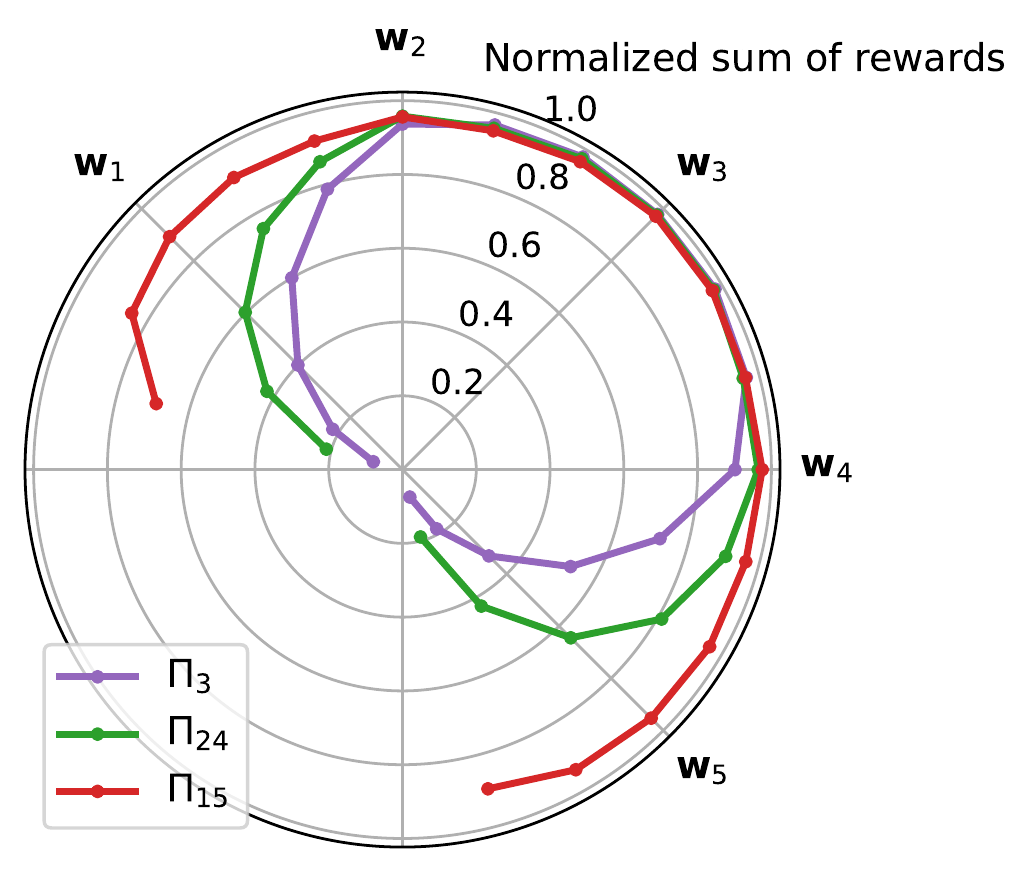}
        \caption{Disjoint sets \hspace{1cm} \ } \label{fig:q1q2a}
    \end{subfigure}
    \hspace{1.0cm}
    \begin{subfigure}{0.375\textwidth}
        \centering
        \includegraphics[width=0.95\textwidth]{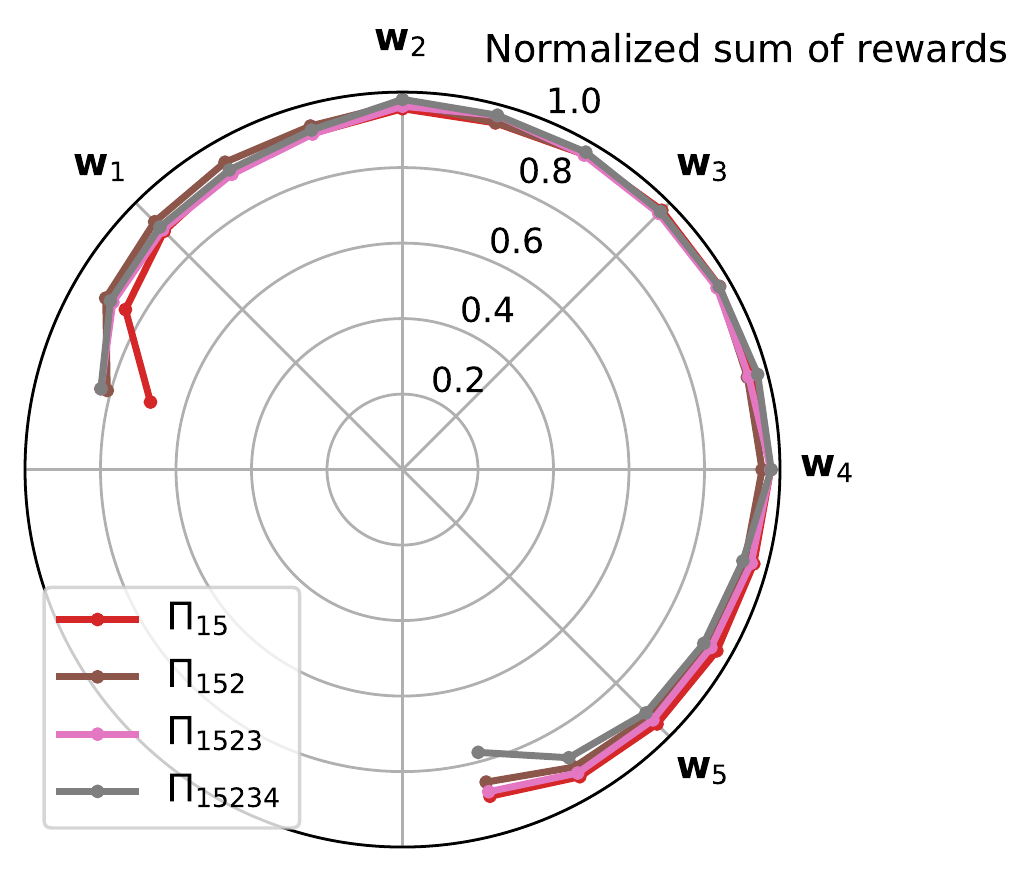}
        \caption{Incrementally growing sets \hspace{1cm} \ } \label{fig:q1q2b}
    \end{subfigure}
    \caption{The normalized sum of rewards over $17$ evenly spread tasks over the nonnegative quadrants of the unit circle. The plots are obtained by averaging over $10$ runs with $1000$ episodes for each task. The performance comparison of $\Pi_{15}$ (a) with disjoint sets $\Pi_{24}$ and $\Pi_5$, and (b) with incrementally growing sets $\Pi_{152}$, $\Pi_{1523}$ and $\Pi_{15234}$.}
    \label{fig:q1q2}
\end{figure}

\textbf{Question 1.} \textit{Do the theoretical results, derived in Section \ref{sec:theory}, also hold empirically?}

In order to answer this question, we constructed a policy set using Algorithm \ref{alg:algorithm} and evaluated it using the evaluation scheme proposed by \cite{barreto2020fast}. Specifically, after initializing $W=\{\mathbf{w}_1, \mathbf{w}_5 \}$ and running Algorithm \ref{alg:algorithm}, we obtained the SIP $\Pi_{15} = \{\pi_1, \pi_5\}$, whose elements are solutions to the tasks $\mathbf{w}_1$ and $\mathbf{w}_5$, respectively (see Fig.\ \ref{fig:env2}). Then, we ran an evaluation scheme that evaluates the GPI policy $\pi_{\Pi_{15}}^{\textnormal{GPI}}$, over $17$ evenly spread tasks over the nonnegative quadrants of the unit circle in Fig.\ \ref{fig:env2}. We did not perform evaluations in the negative quadrant, as the tasks there are not interesting. Results are shown in Fig.\ \ref{fig:q1q2a}. As can be seen, $\pi_{\Pi_{15}}^{\textnormal{GPI}}$ performs well across all the downstream tasks, empirically verifying Theorem \ref{thm:theorem}.

\textbf{Question 2.} \textit{Is having a SIP essential for full downstream task coverage?}

\begin{wrapfigure}{R}{0.375\textwidth}
    \vspace{-0.5cm}
    \centering
    \includegraphics[width=0.35\textwidth]{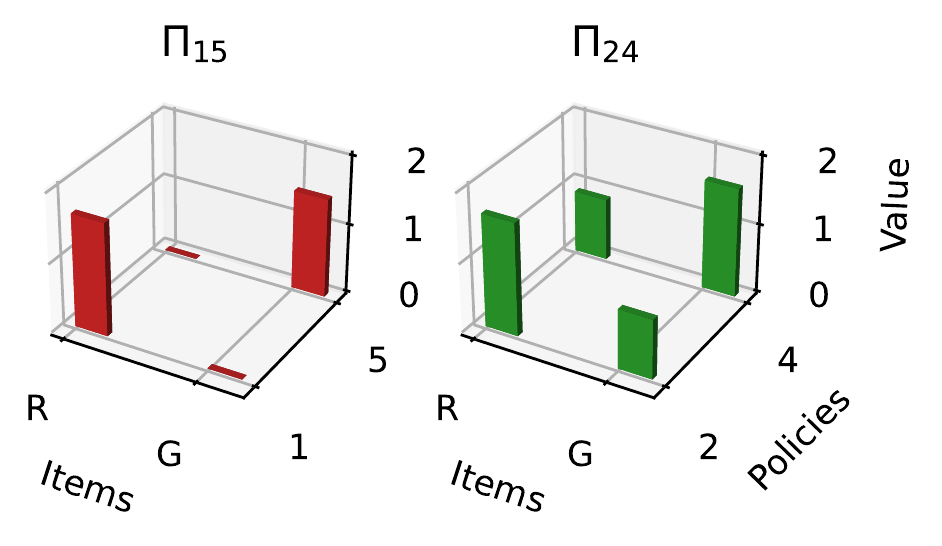}
    \vspace{-0.5cm}
    \caption{The learned SFs of the policies in $\Pi_{15}$ and $\Pi_{24}$ for the state in Fig.\ \ref{fig:env3} and the left action. Here, each row corresponds to a different policy and each column corresponds to a different item (`R'ed and `G'reen).}
    \label{fig:SFs}
\end{wrapfigure}

According to Theorem \ref{thm:theorem}, having a SIP is a requirement for guaranteeing good performance across all downstream tasks. However, in order to see how well the GPI policy performs when this is not the case, similar to \cite{barreto2020fast}, we compared $\Pi_{15}$ with two other possible policy sets: $\Pi_{24} = \{\pi_2, \pi_4 \}$ and $\Pi_{3} = \{\pi_3 \}$, whose policies correspond to the tasks $\mathbf{w}_2$, $\mathbf{w}_4$ and $\mathbf{w}_3$, respectively. It should be noted that $\Pi_{24}$ is a default policy set used in prior studies on SFs \citep{barreto2017successor, barreto2018transfer,barreto2020fast, borsa2018universal}. The results, in Fig.\ \ref{fig:q1q2a}, show that $\pi_{\Pi_{24}}^{\textnormal{GPI}}$ and $\pi_{\Pi_{3}}^{\textnormal{GPI}}$ are not able to perform well across all downstream tasks, specifically failing on quadrants II and IV. This justifies the requirement of learning a SIP for full downstream task coverage.

\textbf{Question 3.} \textit{Do the SFs of the policies in a SIP have a simple form?}

According to Lemma \ref{thm:lemma}, the SFs of the policies in a SIP should have a simple form. To empirically verify this, in Fig.~\ref{fig:SFs}, we visualize the SFs of the policies in the sets $\Pi_{15}$ and $\Pi_{24}$ for the simplified environment depicted in Fig.\ \ref{fig:env3}. We can see that the SFs of the policies in $\Pi_{15}$ indeed have a simple form, in which the off-diagonal entries have values that are very close to zero, while this is not the case for $\Pi_{24}$.

\textbf{Question 4.} \textit{Does adding more policies to a SIP have any effect on its downstream task coverage?}

Theorem \ref{thm:theorem} implies that, given a SIF, having a SIP is enough to guarantee good performance across all possible tasks. To test empirically whether adding more policies to a SIP has any effect on its performance across all tasks, we compare the performance of the policy set $\Pi_{15}$ with policy sets $\Pi_{152}$, $\Pi_{1523}$ and $\Pi_{15234}$, formed by adding one-by-one the policies $\pi_2$, $\pi_3$ and $\pi_4$ to $\Pi_{15}$. Results are shown in Fig.\ \ref{fig:q1q2b}. As expected, adding more policies to the SIP $\Pi_{15}$ has no effect on its downstream task coverage.

\subsection{Comparative and Lifelong Learning Experiments}

\textbf{Question 5.} \textit{How does Algorithm \ref{alg:algorithm} compare to prior diverse policy set construction methods?}

\begin{wrapfigure}{R}{0.4\textwidth}
    \vspace{-0.5cm}
    \centering
    \includegraphics[width=0.375\textwidth]{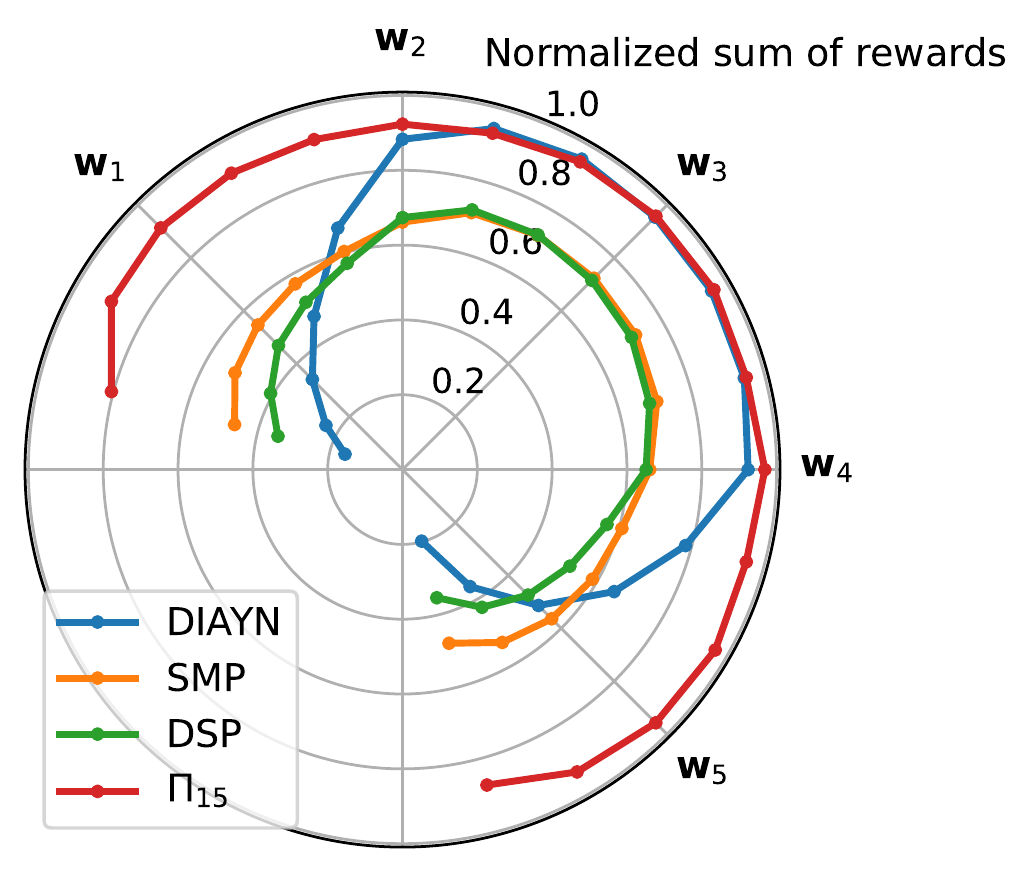}
    \vspace{-0.5cm}
    \caption{The normalized sum of rewards of $\Pi_{15}$, and the policy sets constructed by DIAYN, SMP and DSP. Since the policy sets constructed by the prior methods depend on their particular initialization, their plots are obtained by running each of the constructed policy sets for $5$ runs and then averaging over their results. For each task, the agent was evaluated on $1000$ episodes. The plot for $\Pi_{15}$ is obtained in a similar way as in Fig.\ \ref{fig:q1q2}.}
    \label{fig:q3}
\end{wrapfigure}

Prior work has shown that having access to a diverse set of policies can help transfer to downstream tasks. Algorithm \ref{alg:algorithm} can also be seen as a diverse policy set construction method, as it constructs a policy set that is diverse in the feature visitation profile. In order to see how well it compares to prior work, we compare Algorithm \ref{alg:algorithm} with three recently proposed diverse policy set construction methods: (i) DIAYN\footnote{We use a GPE \& GPI compatible version of DIAYN whose details are provided in the supp.\ material.} \citep{eysenbach2018diversity} which constructs a policy set by collectively training policies with information-theoretic reward functions that encourage discriminablity among the policies in the set based on their state visitation profile, (ii) SMP \citep{zahavy2020discovering} which constructs a policy set by iteratively adding policies trained on the worst-case linear reward with respect to the previous policies in the set, and (iii) DSP \citep{zahavy2021discovering} which constructs a policy set by iteratively adding policies that are trained on tasks $\mathbf{w}_n = -(1/n)\sum_{k=0}^{n} \bm{\psi}_k$, where $n$ is the number of policies in the current policy set. More on the specific implementation details of these methods can be found in the supp.\ material. The comparison results, shown in Fig.\ \ref{fig:q3}, indicate that none of the prior diverse policy set construction methods are able to consistently construct a policy set that enables full downstream task coverage.

\textbf{Question 6.} \textit{Is learning a SIP also effective when the downstream task's reward function cannot  be expressed a single linear combination of the features?}

So far, we have considered the scenario where the GPI policy is evaluated on tasks whose reward functions were obtained by linearly combining the features $\bm{\phi}$ using a single, fixed $\mathbf{w}$. We now consider downstream tasks that do not satisfy this assumption, e.g. where $\mathbf{w}$ changes as a function over the environment's state. Note that Theorem \ref{thm:theorem} has nothing to say in this case. Nevertheless, in order to test whether if learning a SIP can still be useful in this scenario, we consider a transfer setting in which a meta-policy is trained to orchestrate the policies in this set when faced with a downstream task. Specifically, after constructing the policy set, we learn a function that maps states to preference vectors, $\omega : \mathcal{S}\rightarrow \mathcal{W}'$, whose output is then used by GPE \& GPI to synthesize a policy for the downstream task of interest (see the ``Preferences as Actions'' section in \cite{barreto2020fast} and the Option Keyboard framework \citep{barreto2019option} for more details). We define $\mathcal{W}' = \{\mathbf{w}_1, \mathbf{w}_2, \mathbf{w}_3, \mathbf{w}_4, \mathbf{w}_5\}$ and use Q-learning to learn $\omega$.

\begin{figure}[h!]
    \vspace{-0.3cm}
    \centering
    \begin{subfigure}{0.375\textwidth}
        \centering
        \includegraphics[width=\textwidth]{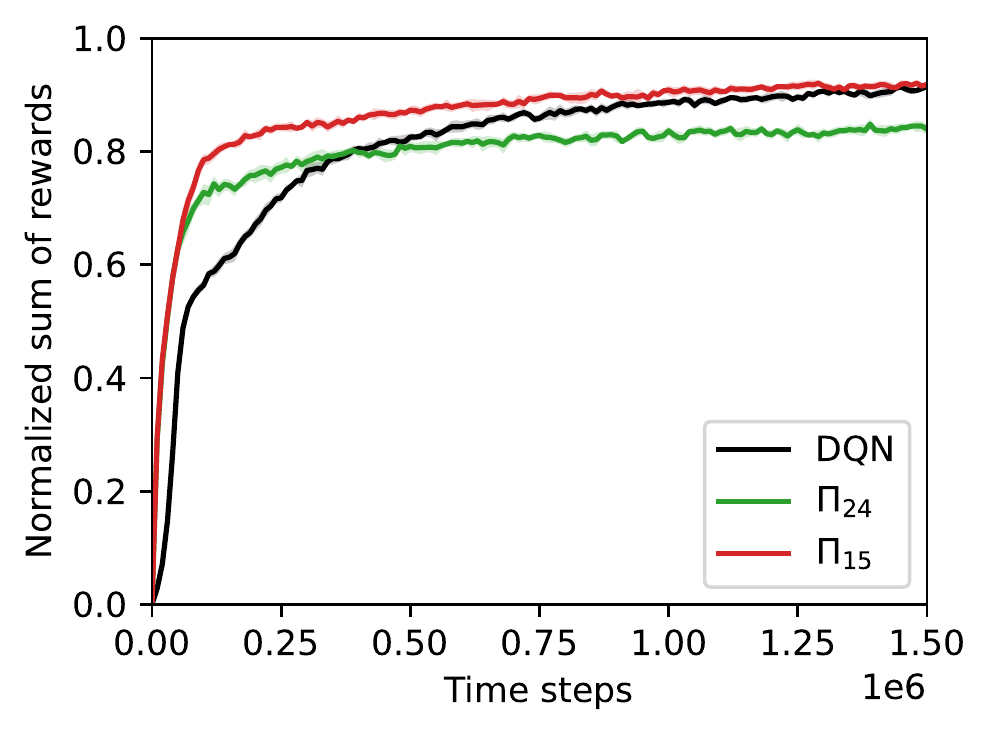}
        \vspace{-0.65cm}
        \caption{Sequential reward collection task} \label{fig:q4a}
    \end{subfigure}
    \hspace{1.0cm}
    \begin{subfigure}{0.375\textwidth}
        \centering
        \includegraphics[width=\textwidth]{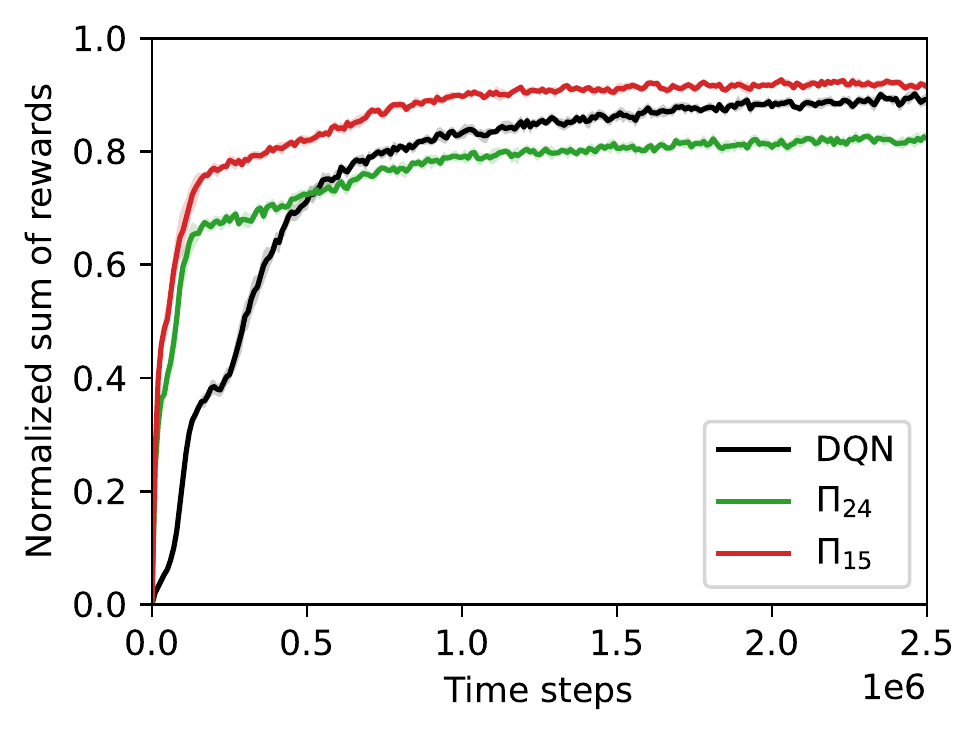}
        \vspace{-0.65cm}
        \caption{Balanced reward collection task} \label{fig:q4b}
    \end{subfigure}
    \caption{The normalized sum of rewards of the policy sets $\Pi_{15}$ and $\Pi_{24}$, and DQN on the (a) sequential reward collection and (b) balanced reward collection tasks. Shadowed regions are one standard error over $10$ runs.}
    \label{fig:q4}
\end{figure}

As an instantiation of the scenario discussed above, we consider two different downstream tasks: (i) sequential reward collection, in which the agent must first pick up items of a certain type and then collect the items of the other type, and (ii) balanced reward collection, in which the agent must pick up the type of item that is more abundant in the environment. We compared the performance of the SIP $\Pi_{15}$ with the policy set $\Pi_{24}$. As a reference to the maximum reachable performance, we also provided the learning curve of DQN \citep{mnih2015human}, whose specific implementation details can be found in the supp.\ material. The results, shown in Fig.\ \ref{fig:q4a} and \ref{fig:q4b}, show that $\Pi_{15}$ outperforms $\Pi_{24}$ both in terms of the learning speed and asymptotic performance, reaching the asymptotic performance of DQN. These results suggest that having access to a SIP can also be effective in scenarios where the downstream task's reward function cannot be described as a single linear combination of the features.

\textbf{Question 7.} \textit{How can learning a SIP be useful in a lifelong RL setting?}

\begin{wrapfigure}{R}{0.5\textwidth}
    \vspace{-0.5cm}
    \centering
    \includegraphics[width=0.5\textwidth]{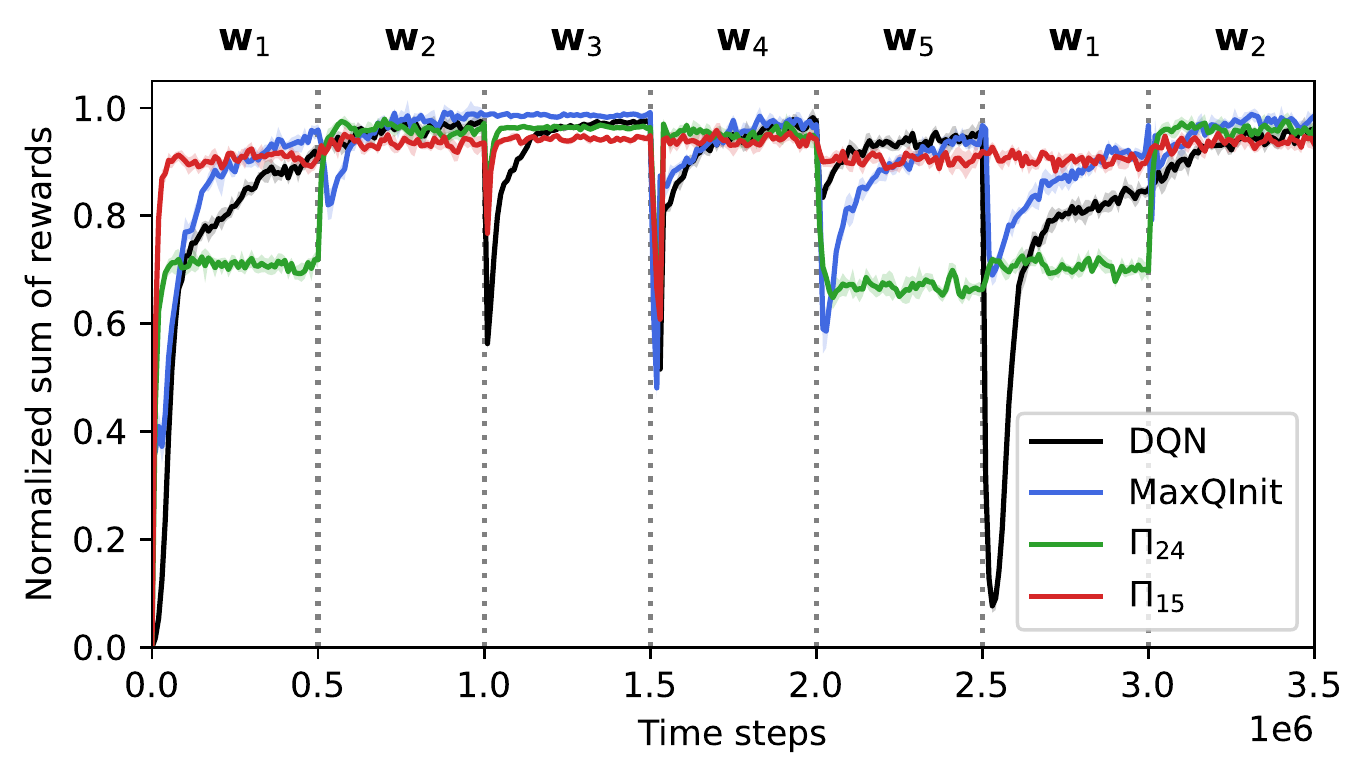}
    \vspace{-0.75cm}
    \caption{The normalized sum of rewards of the policy sets $\Pi_{15}$ and $\Pi_{24}$, DQN, and MaxQInit in a lifelong RL setting described in the text. Shadowed regions are one standard error over $100$ runs.}
    \label{fig:q5}
\end{wrapfigure}

Except for the last question, we have considered an idealistic scenario in which, during transfer, the agent was provided with a single preference vector $\mathbf{w}$ describing a stationary downstream task of interest. We now consider a lifelong RL (LRL) setting, in which the agent has to infer the current $\mathbf{w}$ from data and has to quickly adapt to the changing $\mathbf{w}$'s during its lifetime.\footnote{We consider a setting in which the agent is allowed for a certain amount of pre-training before LRL starts.} Concretely, we consider the setting in which the agent infers $\mathbf{w}$ by following the regression procedure provided in \cite{barreto2020fast} and the tasks change from $\mathbf{w}_1$ to $\mathbf{w}_5$ one-by-one in a clockwise fashion every $5\times 10^5$ time steps. After the end of task $\mathbf{w}_5$, the task resets back to $\mathbf{w}_1$ and this task cycle goes on forever. 

In Fig.\ \ref{fig:q5}, we compare the performance of $\Pi_{15}$ with the policy set $\Pi_{24}$, with DQN, and with a LRL method MaxQInit \citep{abel2018policy}. For fair comparison, we allow for MaxQInit to have access to all of the tasks before each task change (see the supp.\ material for the implementation details). We see that the policy set $\Pi_{15}$ allows for better transfer to changing tasks compared to $\Pi_{24}$. We also see that, although MaxQInit outperforms DQN, they both fail in instantaneously adapting to the changing tasks, and DQN sometimes catastrophically fails on tasks that it has already encountered before (see the drastic drop in performance when the task transitions from $\mathbf{w}_5$ to $\mathbf{w}_1$). These results illustrate how a SIP can also be useful in a LRL scenario.

\section{Related Work}
\label{sec:related_work}

\textbf{Successor Features and GPI.} In this study, we focused on relevant work that uses a set of SFs together with GPI for performing transfer in RL \citep{barreto2017successor,barreto2018transfer,barreto2019option,barreto2020fast, borsa2018universal, hansen2019fast}. Among these studies, the study of \cite{barreto2020fast} is the closest to our work. We built on top of their approach and tried to answer the following questions: (i) what behavior basis should be learned by the agent so that its instantaneous performance on all possible downstream tasks is guaranteed to be good, and (ii) what are the conditions required for this? In addition, we provided empirical results illustrating how a good behavior basis can be useful in more realistic scenarios. Another closely related work is the study of \cite{grimm2019disentangled} which shows that having access to a set of policies that obtain certain disentangled features can allow for achieving high level performance on exponentially many downstream goal-reaching tasks. However, this study requires having access to disentangled features, which have a very specific form; in contrast, we require a set of independent features, which are more general and commonly used \citep{barreto2017successor, barreto2020fast}. Additionally, we provide a more formal treatment of the instantaneous transfer problem and our experiments address a broader range of questions.

\textbf{Diverse Skill Discovery.} Another related line of research is the information theoretic diverse skill discovery literature \citep{gregor2016variational, eysenbach2018diversity, achiam2018variational}, in which information theoretic objectives are used to construct a diverse policy set that enables better exploration and transfer on downstream tasks. An important problem with learning these skills is that most of the discovered policies end up being uninteresting, in the sense that they do not come to be useful in solving downstream tasks of interest, filling up the set with not so interesting policies. Our work prevents this by constructing a policy set in which every policy is trained to achieve only a particular task so that when combined, can solve more complex downstream tasks of interest. 

Other related literature includes the use of SFs to build a diverse skill set. Through using the criterion of robustness, \cite{zahavy2020discovering} proposes a method that constructs a set of policies that do as well as possible in the worst-case scenario, and shows that this set naturally becomes diverse. However, a problem with this policy construction method is that it is sensitive to the initialization of the preference vector and thus leads to policy sets that do not have full downstream task coverage. By using explicit diversity rewards, \cite{zahavy2021discovering} proposes another method that aims to minimize the correlation between the SFs of the policies in the constructed set. However, since this method depends on the evolution of the policy set, it also fails in building a policy set with full downstream tasks coverage; by contrast, Algorithm \ref{alg:algorithm} does exactly this.

\textbf{Policy Reuse for Transfer and Lifelong RL.} Although there have been prior studies on policy reuse for faster learning on downstream tasks \citep{ pickett2002policyblocks, fernandez2006probabilistic, machado2018eigenoption, barreto2018transfer} and for achieving lifelong RL \citep{abel2018policy}, none of them tackle the problem of learning a set of policies for full downstream task coverage. Recently, by building on top of \cite{barreto2017successor}, \cite{nemecek2021policy} proposes a policy set construction method that starts with a set of policies and gradually adds new ones based on a pre-defined threshold. However, they also do not consider the full downstream task coverage problem. The most closely related work however are the recent studies of Tasse et al.\ that improve on \cite{van2019composing} and in which a set of pre-learned base policies are logically composed for super-exponential downstream task coverage \citep{tasse2020boolean} and lifelong RL \citep{tasse2022generalisation}. Despite the similarities in the motivation, there are important differences compared to our work: (i) while their approach requires extensions to the reward and value function definitions, our approach builds on top of the readily available GPE \& GPI framework and (ii) while their approach only considers goal-based tasks, where upon reaching a goal the episode terminates, our approach handles both these tasks and the ones in which the agent has to achieve multiple goals within a single fixed-length episode.

\section{Conclusion and Future Work}
\label{sec:conclusion}

\vspace{-0.02cm}
To summarize, in this study, we provided a theoretical analysis elucidating what counts as a good behavior basis for performing GPE \& GPI and showed that, under certain assumptions, having access to a set of independent policies allows for instantaneously achieving high level performance on all downstream tasks. Based on this analysis, we proposed a simple algorithm that iteratively constructs this policy set. Our empirical results (i) validate our theoretical results, (ii) show that the proposed algorithm compares favorably to prior methods, and (iii) demonstrate that a set of independent policies can be useful in scenarios where the downstream task of interest cannot be expressed with a single preference vector, which includes lifelong RL scenarios. Note that our approach relies on the existence of an independent set of features, which are maximized to obtain independent policies. In general, a feature extraction procedure may be needed as a preprocessing step to obtain such features. We hope to tackle this problem in future work. We also hope to find useful generalizations of our theoretical results to MDPs with stochastic transition functions.

\subsubsection*{Acknowledgments}
This project has been partly funded by an NSERC Discovery grant and the Canada-CIFAR AI Chair program. We would like to thank Shaobo Hou for clarifying some parts of the code and the anonymous reviewers for providing critical and constructive feedback.


\bibliography{iclr2022_conference}
\bibliographystyle{iclr2022_conference}

\newpage

\appendix
\section{Proofs}

\begin{applemma}
\label{thm:lemma}
Let $\Phi$ be a SIF and let $\pi_i$ be a policy that is induced by the feature $\phi_i\in \Phi$ and is a member of a SIP $\Pi$. Then, the entries of the SF $\bm{\psi}^{\pi_i}$ of policy $\pi_i$ has the following form:
\begin{equation*}
    \psi_j^{\pi_i} (s,a) = 
    \begin{dcases}
        \psi_i^{\pi_i} (s,a), & \text{if } i=j \\
        0, & \text{otherwise}
    \end{dcases}.
\end{equation*}
\end{applemma}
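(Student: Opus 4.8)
The plan is to unpack the definition of the successor feature $\bm{\psi}^{\pi_i}$ componentwise and use the SIP condition (\ref{eqn:ind_pol_cond}) to kill every off-diagonal term. Recall that
\[
    \psi_j^{\pi_i}(s,a) = E_{\pi_i, P}\left[ \sum_{k=0}^{\infty} \gamma^k \phi_j(S_{t+k}, A_{t+k}, S_{t+k+1}) \,\Big|\, S_t = s, A_t = a \right].
\]
Fix $j \neq i$. The key observation is that, by the SIP condition, along any trajectory generated by $\pi_i$ starting from $s_0 \sim d_0$, we have $\phi_j(s_t, a_t, s_{t+1}) = \phi_j(s_0, a_0, s_1)$ for all $t$, and by the SIF condition (clause (i) of Definition \ref{def:sif}) we have $\phi_j(s_0, a_0, s_1) = 0$. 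Hence $\phi_j$ is identically $0$ along every trajectory that $\pi_i$ actually traverses, so every term in the sum defining $\psi_j^{\pi_i}$ vanishes and $\psi_j^{\pi_i}(s,a) = 0$. The case $i = j$ is then vacuous: $\psi_i^{\pi_i}(s,a) = \psi_i^{\pi_i}(s,a)$ trivially, and we simply leave that entry unconstrained, which is exactly what (\ref{eqn:simple_form}) asserts.

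First I would make precise the set of state-action pairs $(s,a)$ for which the claim is being made — namely those $(s,a)$ that lie on some trajectory induced by $\pi_i$ from an initial state, since these are the only pairs at which $\bm{\psi}^{\pi_i}$ is meaningfully evaluated for GPI. Then I would argue that for such a pair, the continuation under $\pi_i$ is a sub-trajectory of a full trajectory from $d_0$, so the SIP condition applies to it. Because transitions are deterministic and $\pi_i$ is deterministic, the expectation collapses to a single trajectory, so I do not even need to worry about averaging; each realized value of $\phi_j$ is $0$, and I conclude $\psi_j^{\pi_i}(s,a) = 0$.

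The main obstacle — really the only subtlety — is the bookkeeping around \emph{which} $(s,a)$ pairs the statement covers and the matching of indices in the SIP condition: condition (\ref{eqn:ind_pol_cond}) is stated for $t \in \{1, \ldots, T\}$ with $a_0 = \pi_i(s_0)$, i.e. it pins the value of $\phi_j$ at all later times to its value at the first step, and separately clause (i) of the SIF definition forces that first-step value to be $0$ for \emph{every} action, in particular for $a_0 = \pi_i(s_0)$. I would want to state cleanly that composing these two facts gives $\phi_j \equiv 0$ along the whole $\pi_i$-trajectory. Once that is in place the rest is just substituting into the geometric series and observing it sums to zero term by term; there is no analytic content beyond interchanging the (degenerate) expectation with the sum, which is justified since $\bm{\phi}$ is bounded.
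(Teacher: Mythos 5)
Your proposal is correct and takes essentially the same route as the paper's proof: unroll the successor-feature definition componentwise, use the SIP condition (\ref{eqn:ind_pol_cond}) to pin every off-diagonal feature value along the $\pi_i$-trajectory to its first-step value, and invoke clause (i) of Definition \ref{def:sif} to make that value zero. The only cosmetic differences are that you zero each term before summing (and are explicit about restricting to state-action pairs lying on $\pi_i$-trajectories from $d_0$), whereas the paper first collapses the sum to $\frac{1}{1-\gamma}\phi_j(s_0,a_0)$ and then applies the SIF condition.
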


\begin{proof}
The proof follows directly from the definition of a SIP. Remember that for a policy $\pi_i$ that is a member of a SIP, we have that:
\begin{equation*}
    \phi_j(s_t, a_t, s_{t+1}) = \phi_j(s_0, a_0, s_1) \quad \text{  $\forall j\neq i$,  $\forall i$, $\forall  s_0\sim d_0$ and  $\forall t\in \{1,\ldots, T \}$},
\end{equation*}
where $T$ is the horizon, $a_0 = \pi_i (s_0)$ and $(s_t, a_t, s_{t+1})_{t=1}^T$ is the sequence of state-action-state triples that are generated by $\pi_i$'s interaction with the environment. Thus, for $j\neq i$, we have:
\begin{align*}
    \psi_j^{\pi_i} (s,a) 
      &= E_{\pi_i, P} \left[ \sum_{k=0}^{\infty} \gamma^k \phi_j (S_{t+k}, A_{t+k}, S_{t+k+1}) \Big| S_t=s, A_t=a \right] \\
      &= E_{P} \left[ \sum_{k=0}^{\infty} \gamma^k \phi_j (S_{t}, A_{t}, S_{t+1}) \Big| S_t=s, A_t=a \right] \tag*{(by Definition \ref{def:sip})} \\
      &= \sum_{k=0}^{\infty} \gamma^k E_{P} [ \phi_j (S_{t}, A_{t}, S_{t+1}) | S_t=s, A_t=a ] \\
      &= \sum_{k=0}^{\infty} \gamma^k \phi_j(s,a) \\
      & = \sum_{k=0}^{\infty} \gamma^k \phi_j(s_0,a_0) \tag*{(by Definition \ref{def:sip})} \\
      & = \frac{1}{1-\gamma} \phi_j(s_0,a_0) \\
      & = 0 \tag*{(by Definition \ref{def:sif})}.
\end{align*}
\end{proof}

\begin{apptheorem}
\label{thm:theorem}
Let $\Phi$ be a SIF and let $\Pi$ be a SIP  induced by each of the features in $\Phi$. Then, the GPI policy $\pi_{\Pi}^{\textnormal{GPI}}$ is a solution to the optimization problem defined in (\ref{eqn:problem}).
\end{apptheorem}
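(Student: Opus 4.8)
The plan is to show that for every $\mathbf{w}\in\mathcal{W}$, the GPI policy $\pi_\Pi^{\textnormal{GPI}}$ attains the maximum possible value of the performance measure $J_{r_\mathbf{w}}^\pi$ in (\ref{eqn:measure}), namely $\max_\pi J_{r_\mathbf{w}}^\pi$. First I would observe, using Lemma \ref{thm:lemma}, that for the policy $\pi_i$ induced by $\phi_i$ the GPE value on task $\mathbf{w}$ is simply $\bm{\psi}^{\pi_i}(s,a)^\top\mathbf{w} = \psi_i^{\pi_i}(s,a)\, w_i$, since all off-diagonal (and off-$i$) entries vanish. Hence
\begin{equation*}
    Q_{r_\mathbf{w}}^{\pi_\Pi^{\textnormal{GPI}}}(s,a) \ \ge\ \max_{i} \psi_i^{\pi_i}(s,a)\, w_i .
\end{equation*}
Because the transitions are deterministic and the reward at the initial state is $0$ (SIF condition (i)), the undiscounted return $J_{r_\mathbf{w}}^\pi$ from $s_0\sim d_0$ counts exactly the features collected along $\pi$'s trajectory; I would make this bookkeeping precise by writing $J_{r_\mathbf{w}}^\pi = \E_{s_0\sim d_0}\big[\sum_i n_i^\pi(s_0)\, C\, w_i\big]$, where $n_i^\pi(s_0)$ is the number of times feature $i$ fires along the trajectory of $\pi$ from $s_0$.

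Next I would identify the optimal value of the objective. For a fixed $s_0$, the best any policy can do is, for each coordinate $i$ with $w_i>0$, collect feature $i$ as many times as is achievable from $s_0$, and for each $i$ with $w_i<0$, never collect feature $i$; the SIF independence condition (ii) guarantees these goals do not conflict, so the per-coordinate optima are simultaneously realizable, giving $\max_\pi J_{r_\mathbf{w}}^\pi = \E_{s_0}\big[\sum_{i:\,w_i>0} N_i(s_0)\, C\, w_i\big]$ where $N_i(s_0)$ is the maximal number of $i$-firings reachable from $s_0$. I would then argue that the policy $\pi_i$ in the SIP, being "induced by $\phi_i$" and satisfying the SIP condition (\ref{eqn:ind_pol_cond}) that it never triggers any $\phi_j$, $j\ne i$, must collect feature $i$ the maximal number of times $N_i(s_0)$ — this is essentially the definition of $\pi_i$ optimizing the task $\mathbf{w}_i$ (positive in coordinate $i$, negative elsewhere) under the SIF structure, which forces $\psi_i^{\pi_i}(s_0,\pi_i(s_0)) \cdot (1) = $ that maximum (after accounting for the discount/horizon bookkeeping). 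Thus each $\pi_i\in\Pi$ individually achieves, on task $\mathbf{w}$, exactly the $i$-th term of the optimum when $w_i>0$, and achieves $0$ on all other coordinates.

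Finally I would assemble the GPI guarantee: by the standard GPI bound (the monotone improvement property recalled in Section \ref{sec:background}), $\pi_\Pi^{\textnormal{GPI}}$ is at least as good, in $Q$-value hence in $J$, as every $\pi_i$; but more is true here — because each $\pi_i$ contributes the optimal $i$-th coordinate's worth of return and the coordinates are independently optimizable, $\pi_\Pi^{\textnormal{GPI}}$ acting greedily over the $\max_i \psi_i^{\pi_i}(s,a)w_i$ will, at the initial state, follow whichever direction has positive preference and collect that feature, and (by induction along the deterministic trajectory, re-applying the argument from each successor state) never be forced to collect a feature with negative preference. Hence $J_{r_\mathbf{w}}^{\pi_\Pi^{\textnormal{GPI}}} = \max_\pi J_{r_\mathbf{w}}^\pi$ for all $\mathbf{w}\in\mathcal{W}$, which is exactly (\ref{eqn:problem}).

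\textbf{Main obstacle.} The delicate step is the inductive argument showing that greedily following $\pi_\Pi^{\textnormal{GPI}}$ from $s_0$ actually realizes the simultaneous per-coordinate optimum rather than merely matching the best single $\pi_i$: one must rule out that, after collecting some positive-preference features, the GPI policy is lured by the $Q$-values into a region where it picks up a negative-preference feature, and one must handle the case of several positive coordinates at once. I expect this to require carefully using determinism plus SIF condition (ii) (existence of a trajectory collecting $\phi_i$ alone) at each state along the way, together with the simple form of the SFs from Lemma \ref{thm:lemma}, so that the $\arg\max$ in the GPI rule always points along an "$i$-only" continuation for some beneficial $i$.
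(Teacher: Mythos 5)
Your proposal follows essentially the same route as the paper's proof: use Lemma~\ref{thm:lemma} to collapse the GPE values to $Q^{\pi_i}_{r_\mathbf{w}}(s,a)=w_i\psi_i^{\pi_i}(s,a)$, so that the GPI rule becomes $\arg\max_{a}\max_i w_i\psi_i^{\pi_i}(s,a)$, and then argue that acting greedily on this quantity collects the positive-preference features and avoids the negative ones, which maximizes (\ref{eqn:problem}). The step you single out as the ``main obstacle'' --- showing that the greedy GPI policy realizes the simultaneous per-coordinate optimum and is never lured into triggering a negative-preference feature --- is precisely where the paper's own proof stops being formal: it is dispatched there in a single sentence asserting that the policy ``will obtain all the features associated with a positive $w_i$, ignore the ones with a zero $w_i$, and avoid the ones with a negative $w_i$,'' with no induction along the trajectory and no explicit use of Definition~\ref{def:sif}(ii). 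So nothing in the paper's argument is missing from yours; your trajectory-level bookkeeping of the undiscounted return is in fact more detailed than what the paper provides. One caution: your claim that SIF condition~(ii) by itself guarantees that the per-coordinate optima (collect every $w_i>0$ feature maximally, avoid every $w_i<0$ feature) are \emph{simultaneously} realizable is stronger than what the definition literally states, which only guarantees, for each single $i$, a trajectory collecting $\phi_i$ alone; the paper implicitly assumes this simultaneity as well, so if you carry out the induction you sketch, you would be making explicit an assumption the published proof leaves tacit.
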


\begin{proof}
Remember that the GPI policy for a downstream task $\mathbf{w}$ is obtained as:
\begin{equation*}
    \pi_{\Pi}^{\text{GPI}} (s) \in \arg\max_{a\in \mathcal{A}} Q_{r_{\mathbf{w}}}^{\textnormal{max}} (s,a),
\end{equation*}
where $Q_{r_{\mathbf{w}}}^{\textnormal{max}} (s,a) = \max_{\pi_i\in \Pi} Q_{r_{\mathbf{w}}}^{\pi_i} (s,a)$. By expressing $Q_{r_{\mathbf{w}}}^{\pi_i} (s,a)$ as a weighted sum of the SFs of policy $\pi_i$, we have:
\begin{align*}
    Q_{r_{\mathbf{w}}}^{\textnormal{max}} (s,a) 
      &= \max_i \left[ \sum_{j=1}^n w_j \psi_j^{\pi_i} (s,a) \right] \\
      &= \max_i \left[ w_i \psi_i^{\pi_i} (s,a) + \sum_{j\neq i} w_j \psi_j^{\pi_i} (s,a) \right] \\
      &= \max_i \left[ w_i \psi_i^{\pi_i} (s,a) \right] \tag*{(by Lemma \ref{thm:lemma})}.
\end{align*}
Thus, we have:
\begin{align*}
    \pi_{\Pi}^{\text{GPI}} (s) &\in \arg\max_{a\in \mathcal{A}} Q_{r_{\mathbf{w}}}^{\textnormal{max}} (s,a) \\
    &= \arg\max_{a\in \mathcal{A}} \max_i \left[ w_i \psi_i^{\pi_i} (s,a) \right],
\end{align*}
which implies that the GPI policy $\pi_{\Pi}^{\text{GPI}}$ will obtain all the features associated with a positive $w_i$ (in an order that depends on the product $w_i \psi_i^{\pi_i} (s,a)$), ignore the ones with a zero $w_i$, and avoid the ones with a negative $w_i$. This in turn implies that the GPI policy $\pi_{\Pi}^{\text{GPI}}$ will solve the optimization problem in (\ref{eqn:problem}).
\end{proof}

\section{Experimental Details}

In this section, we provide the implementation details of the 2D item collection environment and the experimental details of our policy construction method together with the prior methods that we have used for comparison.

\subsection{Implementation Details of the 2D Item Collection Environment}

As described in the supplementary material of \cite{barreto2020fast}, at each step the agent receives an $11\times 11\times (n+1)$ tensor (an $11\times 11$ image with $(n+1)$ channels) representing the configuration of the environment, where $n$ is the number of items in the environment. The channels are used to identify the items and the walls. Specifically, there is one channel for each of the $n$ items and one channel for the impassable walls around the edges of the grid.

The observations are ``egocentric'' in the sense that images are shifted so that the agent is always at the top-left cell of the grid. \cite{barreto2020fast} found that this representation helps the learning process as each action taken by the agent results in larger changes in the observations. The observations are also toroidal in the sense that the images wrap around the edges so that the agent always observes the environment, even though the walls prevent it from crossing one side to the other.

\subsection{Our Method and DQN}

For our experiments with GPE \& GPI and DQN \citep{mnih2015human}, we have used the publicly available code\footnote{\url{https://github.com/deepmind/deepmind-research/tree/master/option_keyboard}} of \cite{barreto2020fast}. Thus, implementation details as the neural network architectures that are used, the hyperparameters (replay buffer sizes, learning rates etc.) and the way the SFs and state-action value functions are learned can all be found both in the provided link and in the supplementary material of \cite{barreto2020fast}. We have also followed the same experimental protocol provided in the supplementary material of \cite{barreto2020fast}.

\subsection{Prior Diverse Policy Set Construction Methods}

\begin{wrapfigure}{R}{0.45\textwidth}
    \centering
    \includegraphics[height=12em]{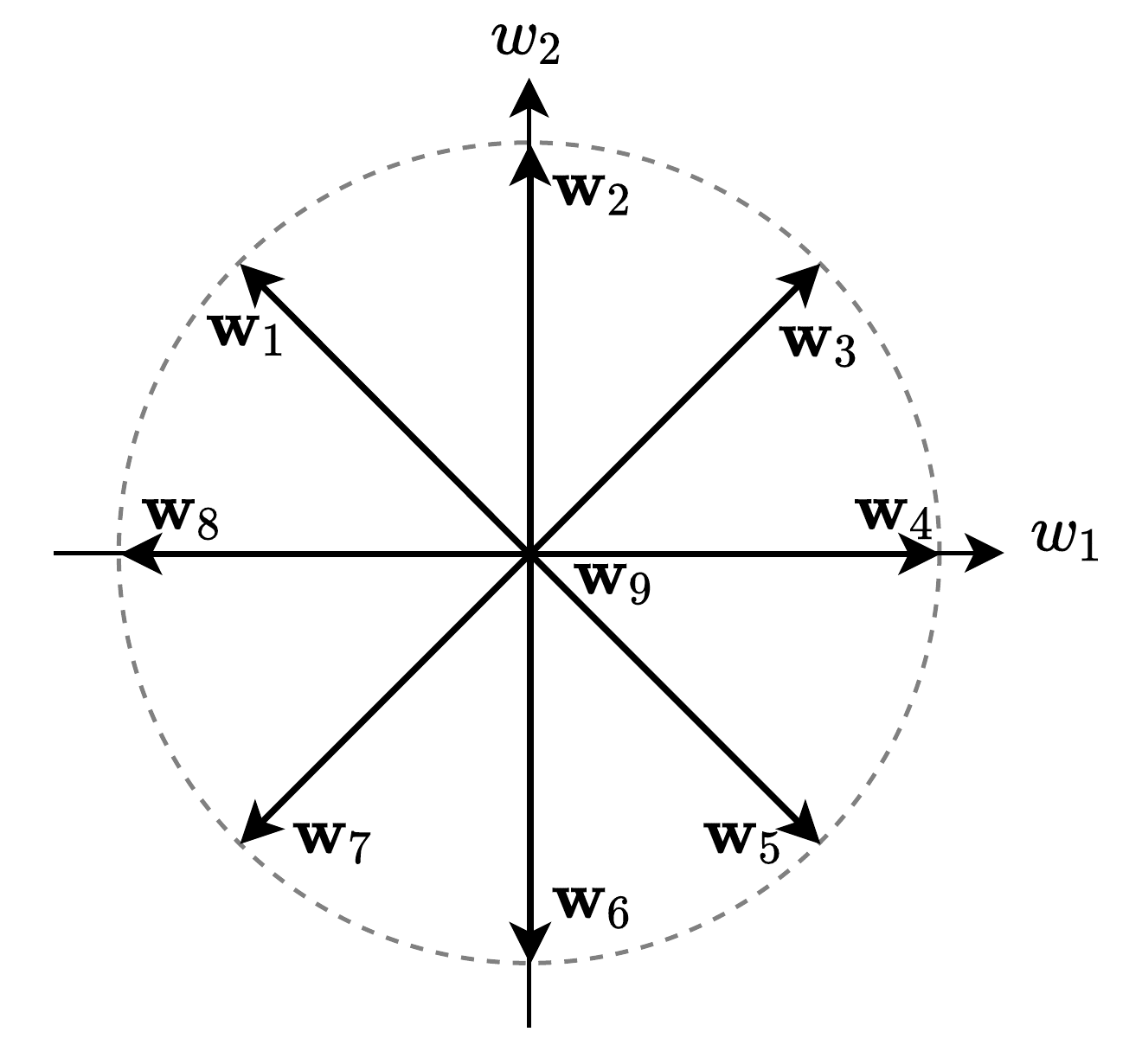}
    \vspace{-0.5cm}
    \caption{Eight different preference vectors that lie on the surface of the $\ell_2$ 2D ball and a single preference vector ($\mathbf{w}_9$) that is at the origin of this ball. The corresponding values are: $\mathbf{w}_1 = \irow{-\sqrt{1/2} & +\sqrt{1/2}}$, $\mathbf{w}_2 = \irow{0 & +1}$, $\mathbf{w}_3 = \irow{+\sqrt{1/2} & +\sqrt{1/2}}$, $\mathbf{w}_4 = \irow{+1 & 0}$, $\mathbf{w}_5 = \irow{+\sqrt{1/2} & -\sqrt{1/2}}$, $\mathbf{w}_6 = \irow{0 & -1}$ $\mathbf{w}_7 = \irow{-\sqrt{1/2} & -\sqrt{1/2}}$, $\mathbf{w}_8 = \irow{-1 & 0}$, $\mathbf{w}_9 = \irow{0 & 0}$.}
    \label{fig:2Dplane_supp}
\end{wrapfigure}

Before moving on to the implementation details of the prior diverse policy set construction methods, we start this section by defining the concept of \emph{reward equivalent policies (REP)}. This definition allows for a simple representation of the policy sets that are constructed by these prior methods in terms of the policy sets that can be induced by the nine preference vectors in Fig.\ \ref{fig:2Dplane_supp}.

\begin{definition}[REP] \label{def:equiv_policies}
Let $\pi_i$ and $\pi_j$ be two policies that are induced by tasks $\mathbf{w}_i$ and $\mathbf{w}_j$, respectively. $\pi_i$ and $\pi_j$ are defined to be {\em REP} if they achieve the same expected total reward on an arbitrary task $\mathbf{w}$, i.e. if:
\begin{equation*}
    E_{\pi_i, P} \left[ \sum_{t=0}^\infty r_{\mathbf{w}}(S_t, A_t, S_{t+1}) \Big| S_0\sim d_0 \right] = E_{\pi_j, P} \left[ \sum_{t=0}^\infty r_{\mathbf{w}}(S_t, A_t, S_{t+1}) \Big| S_0\sim d_0 \right].
\end{equation*}
\end{definition}

It should be noted that even if two policies are induced by two different tasks, they can still be REP. For instance, the policies induced by the tasks $\irow{0.6 & 0.8}$ and $\irow{0.8 & 0.6}$ are reward equivalent as they achieve the same expected total reward by collecting all the items in the environment depicted in Fig.\ \ref{fig:env1}. Note that Definition \ref{def:equiv_policies} only takes into account what the policies achieve and not how they exactly achieve it.

Importantly, Definition \ref{def:equiv_policies} allows for defining equivalences between {\em policy sets}. For instance, the policy set induced by tasks $\irow{0.6 & 0.8}$ and $\irow{0.6 & -0.8}$ contains a policy that collects both of the items (reward equivalent to $\pi_3$) and a policy that collects the first item while avoiding the second one (reward equivalent to $\pi_5$). The policy set $\Pi_{35}$ also contains policies that achieve the same tasks. Thus, the former policy set can be considered to be reward equivalent to the policy set $\Pi_{35}$. Note that another policy set induced by tasks $\irow{0.8 & 0.6}$ and $\irow{0.8 & -0.6}$ can also be considered to be reward equivalent to the policy set $\Pi_{35}$. In the rest of this section, we will use this reward equivalence relation for a simple representation of the policy sets constructed by prior methods (in terms of the policy sets that can be induced by the preference vectors in Fig.\ \ref{fig:2Dplane_supp}).

\textbf{DIAYN} \citep{eysenbach2018diversity}. Among the prior diverse policy set construction methods that we have used, all except DIAYN are compatible with the GPE \& GPI framework. Thus, in order to also make DIAYN compatible, while discovering a set of diverse policies, we have concurrently learned the SFs of the policies in this set so that GPE \& GPI can be performed later on. Importantly, rather than encouraging diversity in the state visitation profile, we have encouraged diversity in the \emph{feature visitation profile} by feeding the visited feature vectors $\bm{\phi}$ as input to the discriminator (as opposed to feeding the visited states as in regular DIAYN). This is because we are interested in policies that are able to obtain certain features, as opposed to ones that just push the agent to different parts of the state space. For the RL algorithm, we have used a vanilla policy gradient algorithm with an additional entropy loss term, for preventing the policy from becoming deterministic in the early stages of training. 

After setting the number of policies to be discovered to $3$ (as there are $3$ possible feature vectors in the environment depicted in Fig.\ \ref{fig:env1}: $\irow{1 & 0}, \irow{0 & 1}$ and $\irow{0 & 0}$), learning the SFs of these policies and running the GPE \& GPI compatible version of DIAYN described above, we have observed that the discovered policies either end up being policies that obtain both of the items (reward equivalent to $\pi_3$) or policies that obtain none (reward equivalent to $\pi_7$), corresponding to a policy set that is reward equivalent to the policy set $\Pi_{37}$ (see Fig.\ \ref{fig:2Dplane_supp}). Thus, the results presented in Fig. \ref{fig:q3} corresponds to the results obtained with the policy set  $\Pi_{37}$. We have also experimented with larger numbers of policies to be discovered, however, we observed no difference in the qualitative behavior of the policies that were discovered.

In our experiments, for the representation of the policy we have used the same neural network architecture that was provided in \cite{barreto2020fast}. The hyperparameters of our implementation are provided in Table \ref{tab:hyper}.

\begin{table}[h!]
    \caption{Hyperparameters of our DIAYN implementation.}
    \vspace{-0.5cm}
    \centering
    \begin{tabular}{l|r}
        \hline
        Learning rate of the policy & $1e-3$ \\
        Learning rate of the discriminator & $1e-3$ \\
        Discount & $0.95$ \\
        Entropy coefficient & $0.001$ \\
        Gradient clip & $1.00$ \\
        Value function loss coefficient & $0.05$ \\
        \hline
    \end{tabular}
    \label{tab:hyper}
\end{table}

\textbf{SMP} \citep{zahavy2020discovering}. After initializing the preference vector to an arbitrary vector on the surface of the $\ell_2$ 2-dimensional ball (see Fig. \ref{fig:2Dplane_supp}) and running the SMP algorithm till termination, we have observed that for each initialization of preference vector, different policy sets were constructed. The resulting policy sets end up being reward equivalent to one of the following sets: $\Pi_{37}$, $\Pi_{17}$, $\Pi_{157}$, $\Pi_{57}$, $\Pi_{517}$, $\Pi_{27}$, $\Pi_{47}$, $\Pi_{7}$, $\Pi_{87}$, $\Pi_{857}$,
$\Pi_{8517}$, $\Pi_{617}$, $\Pi_{6157}$ and $\Pi_{67}$. Thus, the results presented in Fig.\ \ref{fig:q3} correspond to the average results obtained by these policy sets.

\textbf{DSP} \citep{zahavy2021discovering}. After initializing the preference vector to an arbitrary vector on the surface of the $\ell_2$ 2-dimensional ball (see Fig. \ref{fig:2Dplane_supp}), setting $T=2$ (see Algorithm 1 in \cite{zahavy2021discovering} for the details) and running the DSP algorithm, we have observed that for each initialization of preference vector, different policy sets were constructed. The resulting policy sets end up being reward equivalent to one of the following sets: $\Pi_{37}$, $\Pi_{58}$, $\Pi_{16}$, $\Pi_{27}$, $\Pi_{47}$, $\Pi_{79}$, $\Pi_{68}$ and $\Pi_{86}$. Thus, the results presented in Fig.\ \ref{fig:q3} correspond to the average results obtained by these policy sets.

We have also experimented with larger $T$ values, however, after $T=2$ the only policy to be added to the set is a policy that is reward equivalent to $\pi_7$. For instance, when $T=3$, the resulting policy sets end up being reward equivalent to one of the following sets: $\Pi_{377}$, $\Pi_{587}$, $\Pi_{167}$, $\Pi_{277}$, $\Pi_{477}$, $\Pi_{797}$ and $\Pi_{867}$. Since $\pi_7$ is a policy that avoids the items in the environment depicted in Fig.\ \ref{fig:env1}, it is not a useful policy for the tasks considered in this study. Thus, we reported results only for the case of $T=2$.

\subsection{MaxQInit}

MaxQInit \citep{abel2018policy} is a lifelong RL approach in which the value function of an agent is initialized to the best possible value that minimizes the learning time in a new downstream tasks, while at the same time preserving PAC guarantees. More specifically, before starting a new task, the value function is initialized as follows:
\begin{equation}
    \hat{Q}_{\textnormal{max}} (s,a) = \max_{M\in \hat{\mathcal{M}}} Q_M (s,a)
    \label{eqn:maxqinit}
\end{equation}
where $\hat{\mathcal{M}}$ is the set of MDPs that the agent has sampled so far, and $Q_M$ is the value function that the agent learned from interacting with each MDP. However, as the policy sets ($\Pi_{15}$ and $\Pi_{24}$) generated by Algorithm \ref{alg:algorithm} are obtained as a result of pre-training in the environment, fair a comparison with these policy sets, we have allowed for $\hat{\mathcal{M}}$ to be equal to $\mathcal{M}$, where $\mathcal{M}$ is the set of all MDPs (see (\ref{eqn:mdp})).

In the GPE \& GPI framework, (\ref{eqn:maxqinit}) corresponds to initializing the value functions using to the values of the task $\mathbf{w}_3 = \irow{+\sqrt{1/2} & +\sqrt{1/2}}$ (see Fig.\ \ref{fig:2Dplane_supp}) as it is with this task that $Q_M$ is the maximized. Thus, the results presented in Fig.\ \ref{fig:q5} correspond to the results obtained by initializing the value function of DQN \citep{mnih2015human} to the value function of the task $\mathbf{w}_3$ right at the beginning of each new task.

\section{Unnormalized Results}

In this section, we provide the unnormalized results for the experiments in Section \ref{sec:experiments}. The r- and y-axis of the plots now indicates the sum of rewards as it is, without any normalization.

\begin{figure}[h!]
    \centering
    \begin{subfigure}{0.375\textwidth}
        \centering
        \includegraphics[width=0.95\textwidth]{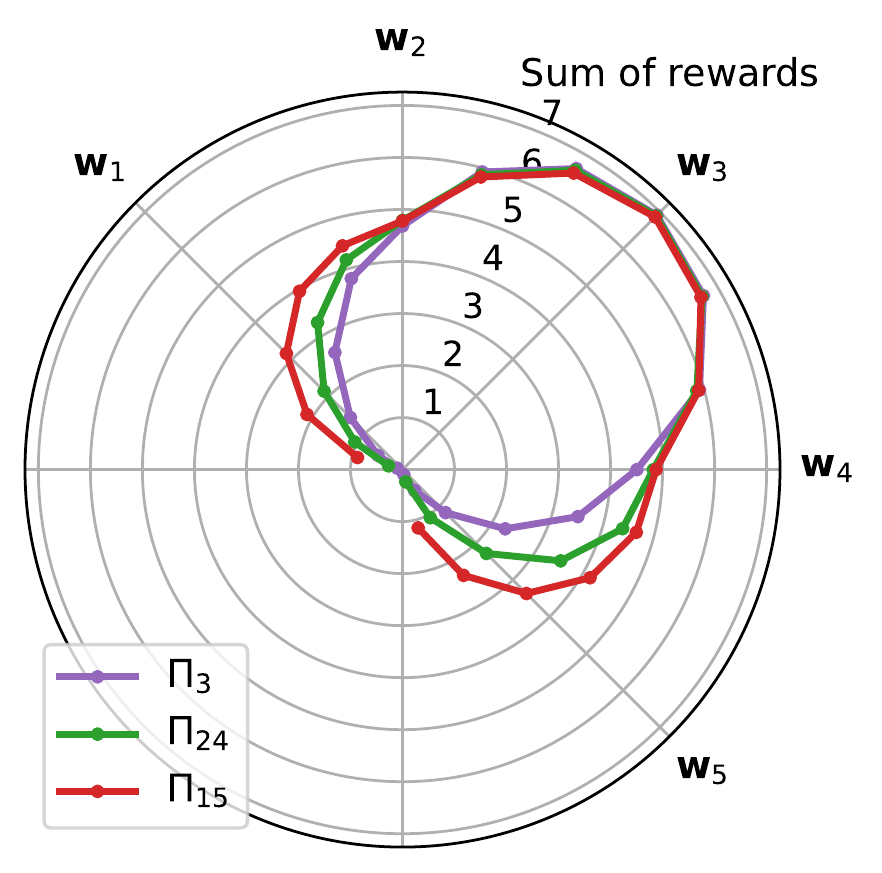}
        \caption{Disjoint sets \hspace{1cm} \ } \label{fig:q1q2a_unnorm}
    \end{subfigure}
    \hspace{1.0cm}
    \begin{subfigure}{0.375\textwidth}
        \centering
        \includegraphics[width=0.95\textwidth]{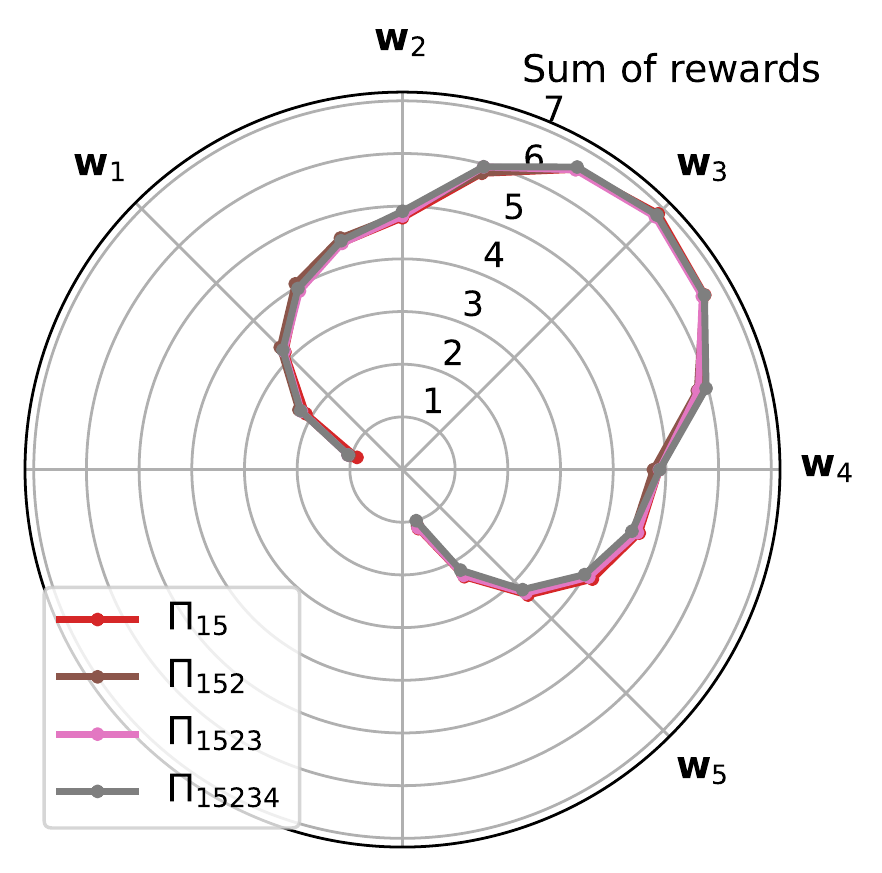}
        \caption{Incrementally growing sets \hspace{1cm} \ } \label{fig:q1q2b_unnorm}
    \end{subfigure}
    \caption{The unnormalized sum of rewards over $17$ evenly spread tasks over the nonnegative quadrants of the unit circle. The plots are obtained by averaging over $10$ runs with $1000$ episodes for each task. The performance comparison of (a) $\Pi_{15}$ with disjoint sets $\Pi_{24}$ and $\Pi_5$, and (b) $\Pi_{15}$ with incrementally growing sets $\Pi_{152}$, $\Pi_{1523}$ and $\Pi_{15234}$.}
    \label{fig:q1q2_unnorm}
\end{figure}

\begin{figure}[h!]
    \centering
    \includegraphics[width=0.375\textwidth]{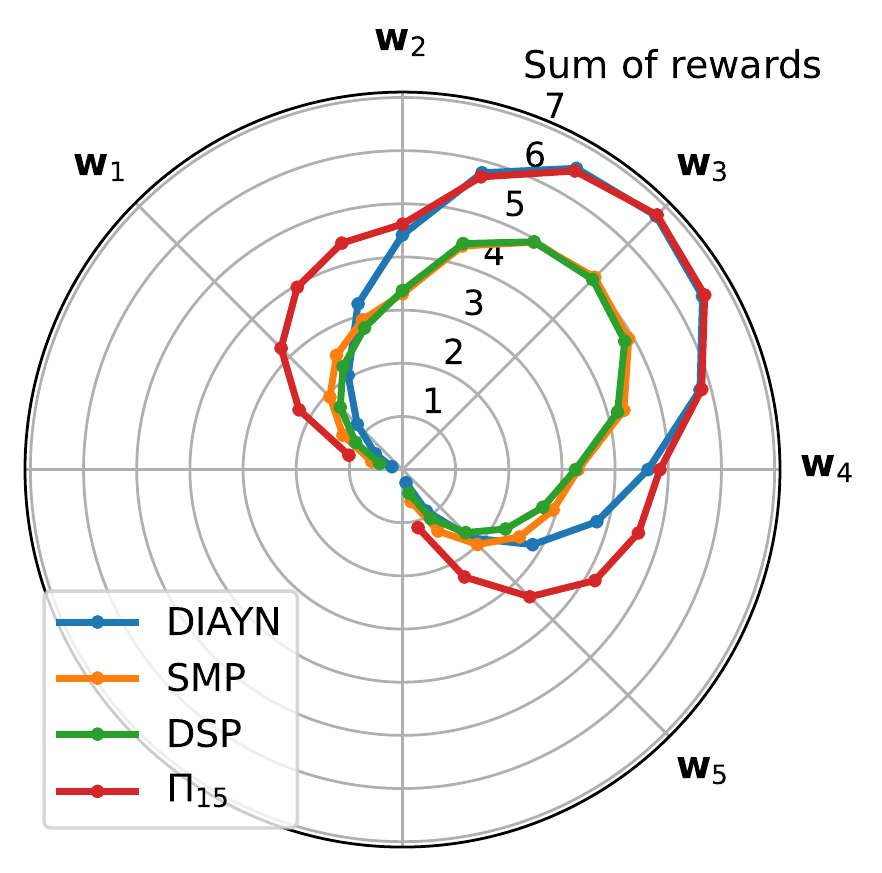}
    \caption{The unnormalized sum of rewards of $\Pi_{15}$, and the policy sets constructed by DIAYN, SMP and DSP. Since the policy sets constructed by the prior methods depend on their particular initialization, their plots are obtained by running each of the constructed policy sets for $5$ runs and then averaging over their results. For each task, the agent was evaluated on $1000$ episodes. The plot for $\Pi_{15}$ is obtained in a similar way as in Fig.\ \ref{fig:q1q2_unnorm}.}
    \label{fig:q3_unnorm}
\end{figure}

\begin{figure}[h!]
    \centering
    \begin{subfigure}{0.375\textwidth}
        \centering
        \includegraphics[width=\textwidth]{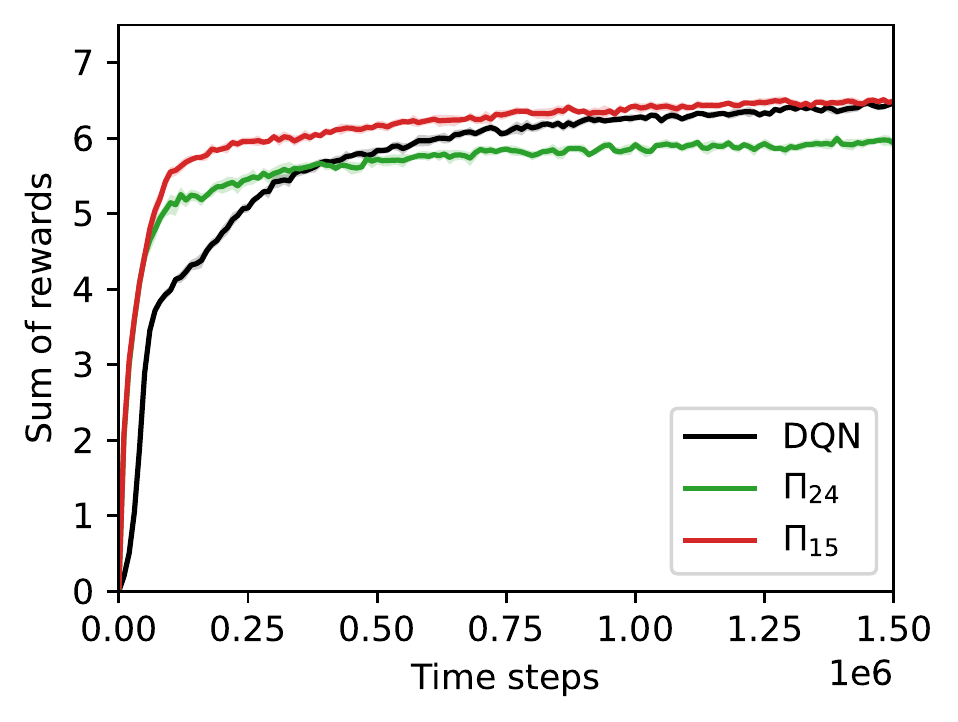}
        \caption{Sequential reward collection task} \label{fig:q4a_unnorm}
    \end{subfigure}
    \hspace{1.0cm}
    \begin{subfigure}{0.375\textwidth}
        \centering
        \includegraphics[width=\textwidth]{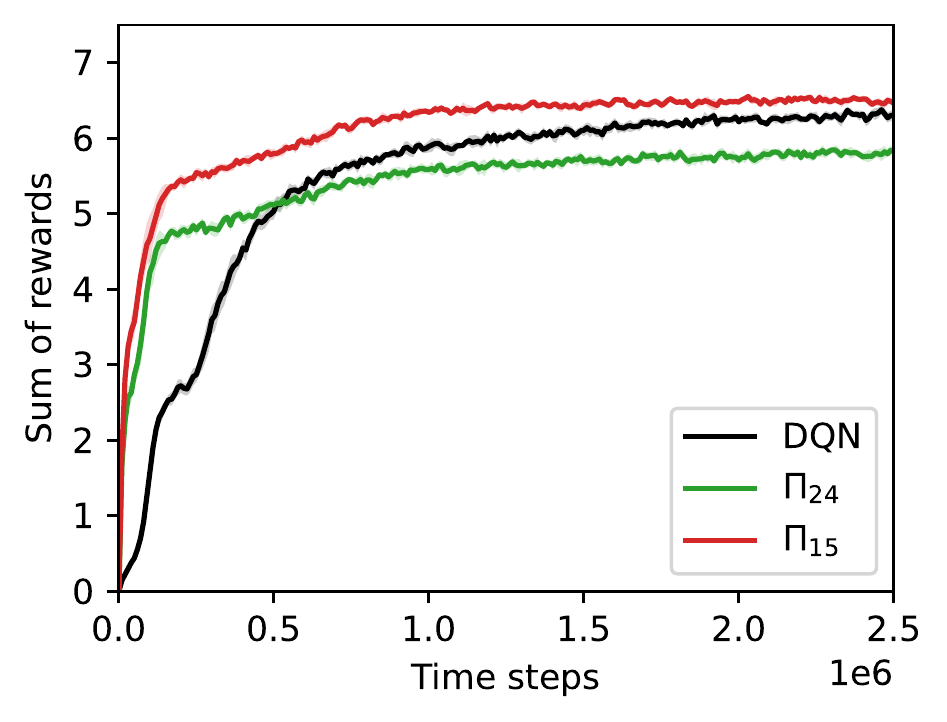}
        \caption{Balanced reward collection task} \label{fig:q4b_unnorm}
    \end{subfigure}
    \caption{The unnormalized sum of rewards of the policy sets $\Pi_{15}$ and $\Pi_{24}$, and DQN on the (a) sequential reward collection and (b) balanced reward collection tasks. Shadowed regions are one standard error over $10$ runs.}
    \label{fig:q4_unnorm}
\end{figure}

\begin{figure}[h!]
    \centering
    \includegraphics[width=0.5\textwidth]{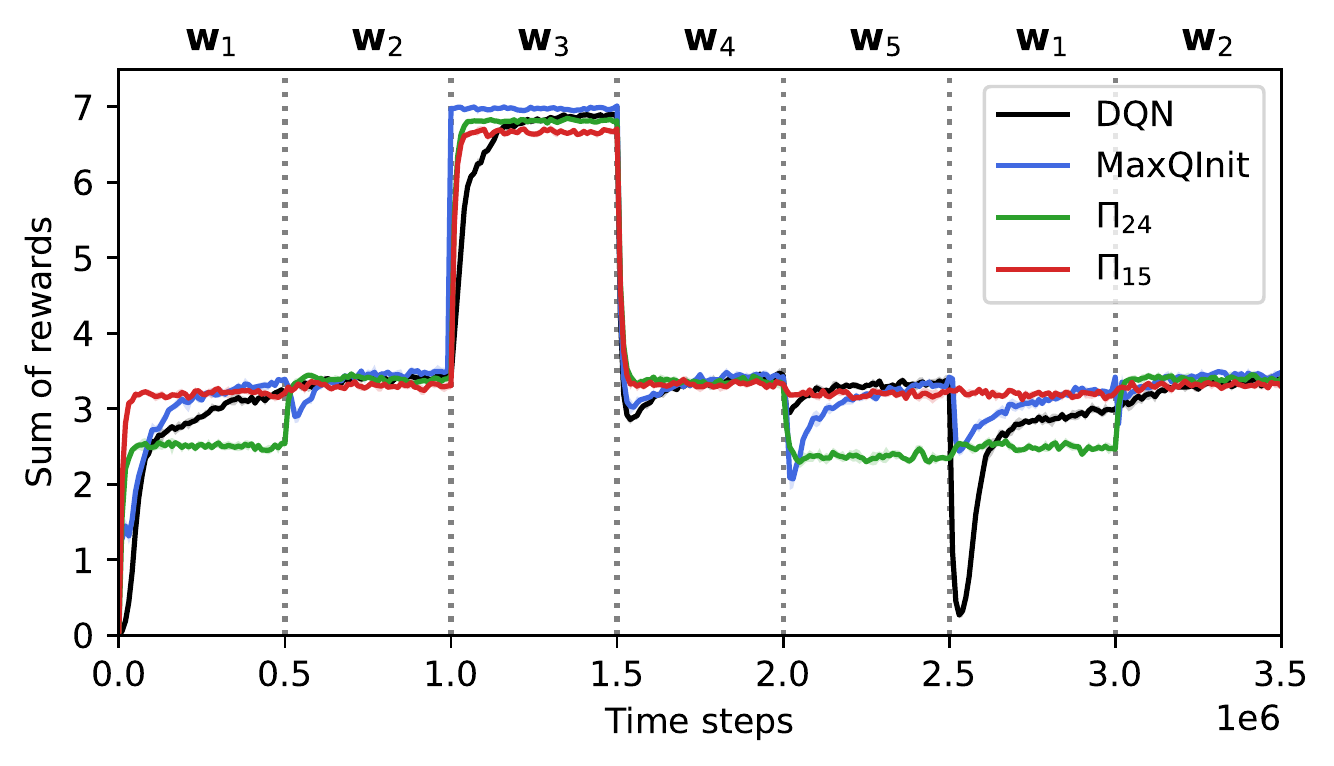}
    \caption{The unnormalized sum of rewards of the policy sets $\Pi_{15}$ and $\Pi_{24}$, DQN, and MaxQInit in a lifelong RL setting described in the text. Shadowed regions are one standard error over $100$ runs.}
    \label{fig:q5_unnorm}
\end{figure}

\section{Results for Stochastic Environments}

Even though we have developed our theoretical results by assuming MDPs with deterministic transition functions, in order to test the applicability of our results with stochastic transition functions, we also  performed experiments in the stochastic version of the 2D item collection environment with ``slip'' probabilities $0.125$ and $0.25$. Results are shown in Fig. \ref{fig:q1q2_stoch_0125} and \ref{fig:q1q2_stoch_025}. As can be seen, even in the stochastic settings, $\pi_{\Pi_{15}}^{\textnormal{GPI}}$ is able to perform better across all downstream tasks compared to $\pi_{\Pi_{24}}^{\textnormal{GPI}}$ and $\pi_{\Pi_{3}}^{\textnormal{GPI}}$. It can also be seen that adding more policies to the independent policy set $\Pi_{15}$ again has no effect on the downstream task coverage.

\begin{figure}[h!]
    \centering
    \begin{subfigure}{0.375\textwidth}
        \centering
        \includegraphics[width=0.95\textwidth]{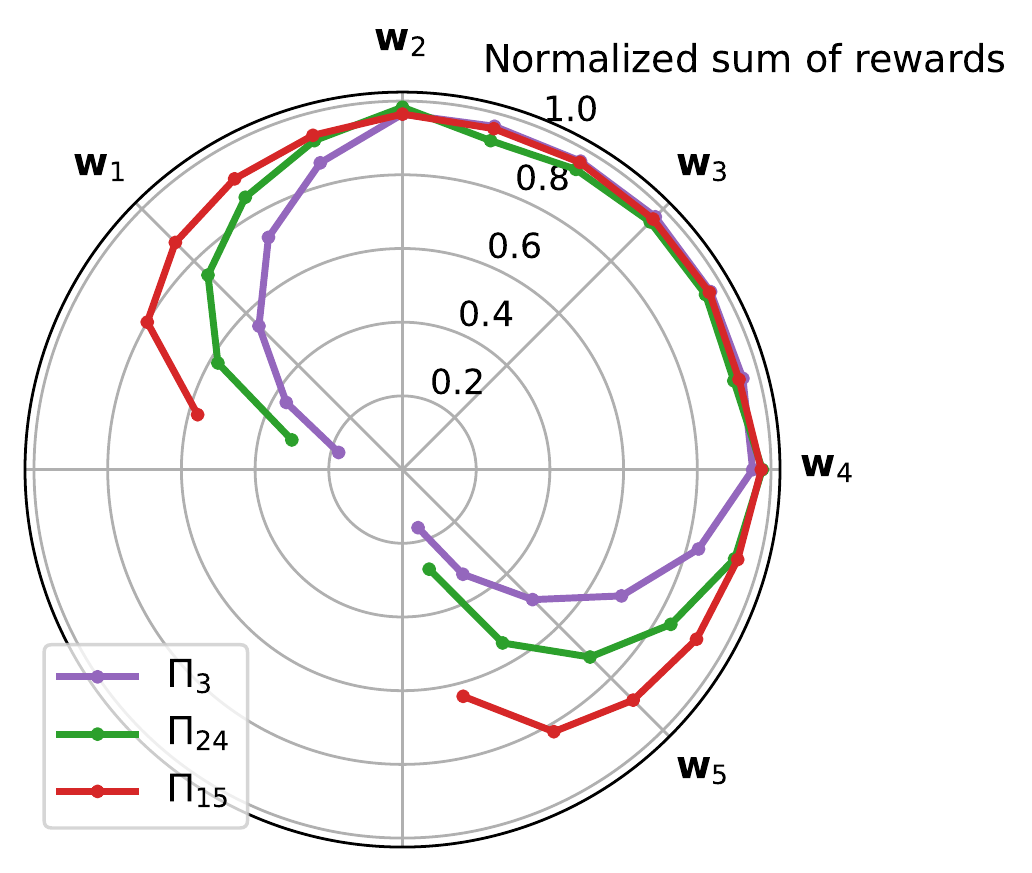}
        \caption{Disjoint sets \hspace{1cm} \ } \label{fig:q1q2a_stoch_0125}
    \end{subfigure}
    \hspace{1.0cm}
    \begin{subfigure}{0.375\textwidth}
        \centering
        \includegraphics[width=0.95\textwidth]{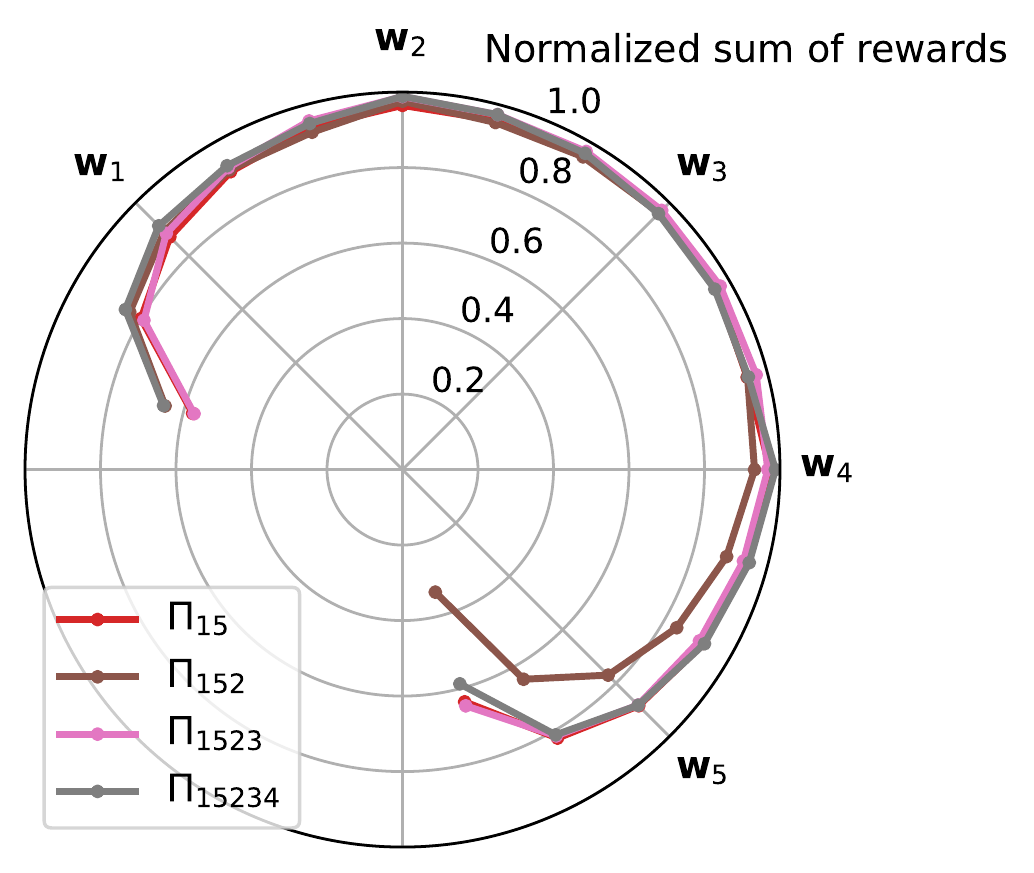}
        \caption{Incrementally growing sets \hspace{1cm} \ } \label{fig:q1q2b_stoch_0125}
    \end{subfigure}
    \caption{The normalized sum of rewards over $17$ evenly spread tasks over the nonnegative quadrants of the unit circle. The results provided in this figure are for the stochastic 2D item collection environment with ``slip'' probability $0.125$. The plots are obtained by averaging over $10$ runs with $1000$ episodes for each task. The performance comparison of $\Pi_{15}$ (a) with disjoint sets $\Pi_{24}$ and $\Pi_5$, and (b) with incrementally growing sets $\Pi_{152}$, $\Pi_{1523}$ and $\Pi_{15234}$.}
    \label{fig:q1q2_stoch_0125}
\end{figure}

\begin{figure}[h!]
    \centering
    \begin{subfigure}{0.375\textwidth}
        \centering
        \includegraphics[width=0.95\textwidth]{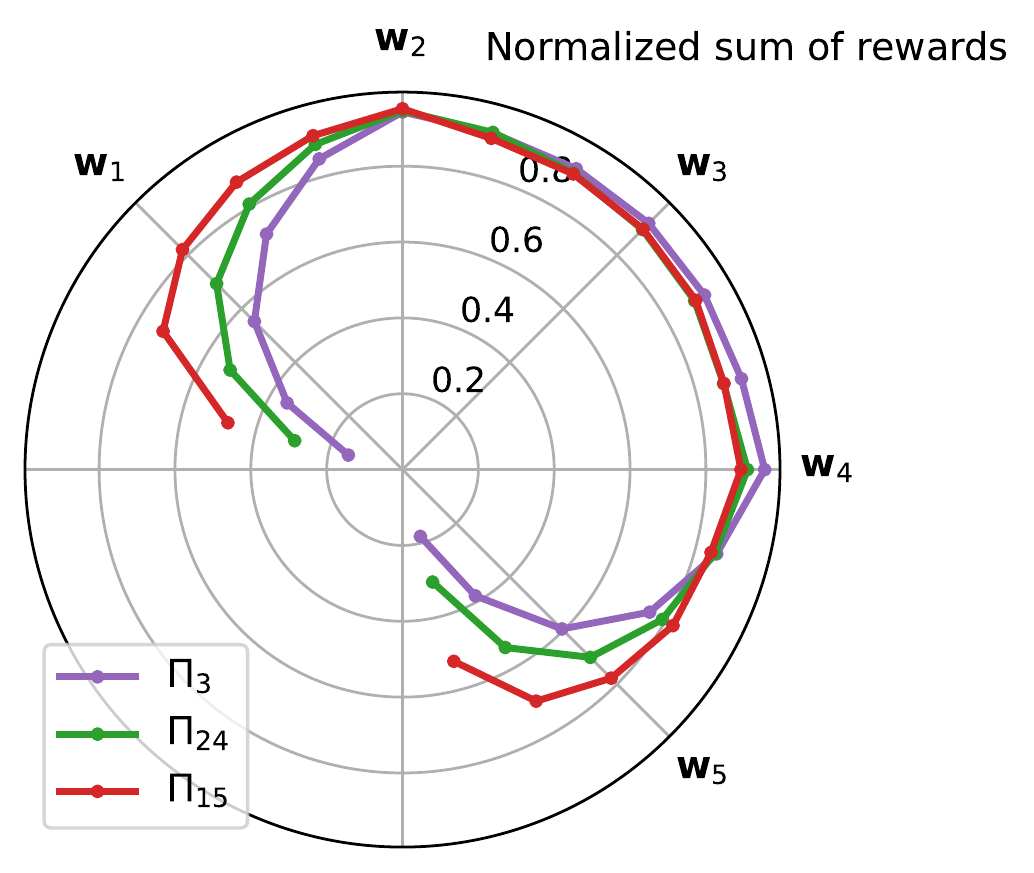}
        \caption{Disjoint sets \hspace{1cm} \ } \label{fig:q1q2a_stoch_025}
    \end{subfigure}
    \hspace{1.0cm}
    \begin{subfigure}{0.375\textwidth}
        \centering
        \includegraphics[width=0.95\textwidth]{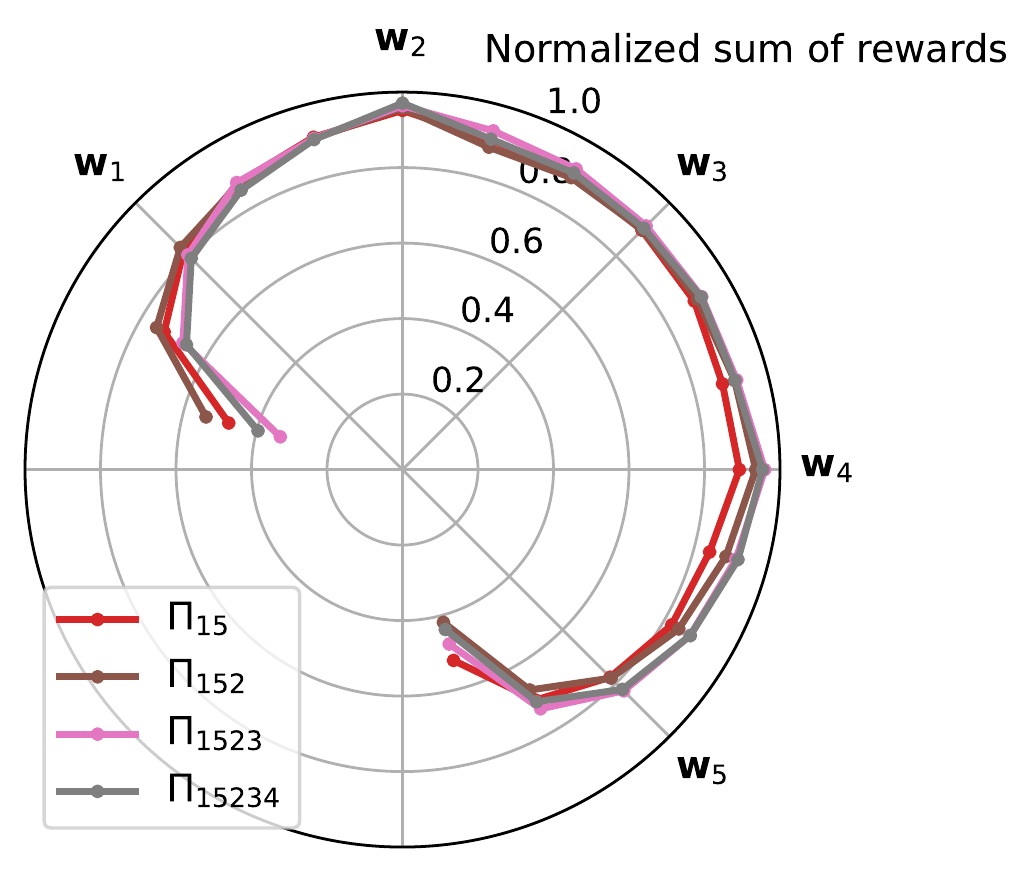}
        \caption{Incrementally growing sets \hspace{1cm} \ } \label{fig:q1q2b_stoch_025}
    \end{subfigure}
    \caption{The normalized sum of rewards over $17$ evenly spread tasks over the nonnegative quadrants of the unit circle. The results provided in this figure are for the stochastic 2D item collection environment with ``slip'' probability $0.25$. The plots are obtained by averaging over $10$ runs with $1000$ episodes for each task. The performance comparison of $\Pi_{15}$ (a) with disjoint sets $\Pi_{24}$ and $\Pi_5$, and (b) with incrementally growing sets $\Pi_{152}$, $\Pi_{1523}$ and $\Pi_{15234}$.}
    \label{fig:q1q2_stoch_025}
\end{figure}

\end{document}